\definecolor{mayablue}{rgb}{0.45, 0.76, 0.98}
\newcommand{\E}{\mathbb{E}}
\newcommand{\m}{\mathbf}
\newcommand{\x}{\mathbf{x}}
\newcommand{\y}{\mathbf{y}}
\newcommand{\z}{\mathbf{z}}
\newcommand{\yhat}{\hat{\mathbf{y}}}
\newcommand{\zhat}{\hat{\mathbf{z}}}
\newcommand{\ytilde}{\tilde{\mathbf{y}}}
\newcommand{\kl}{D_{KL}}
\newcommand{\method}{LSNPC}
\newcommand{\vocseven}{VOC07}
\newcommand{\voctwelve}{VOC12}
\newcommand{\coco}{COCO}
\newcommand{\tomato}{Tomato}
\newcommand{\pair}{\textsc{Pair}}
\newcommand{\sym}{\textsc{Sym}}
\newcommand{\macrof}{macro-F1}
\newcommand{\microf}{micro-F1} 
\newtheorem{assumption}{Assumption}
\newtheorem{corollary}{Corollary}
\newtheorem{theorem}{Theorem}
\newtheorem{lemma}{Lemma}
\newtheorem{proposition}{Proposition}
\crefname{assumption}{Assumption}{Assumptions}
\crefname{eq}{Eq.}{Eqs.}
\Crefname{eq}{Equality}{Equalities}
\begin{document}

\title{Correcting Noisy Multilabel Predictions:\\ Modeling Label Noise through Latent Space Shifts}
% \title{Modeling Label Noise via Latent Space Shifts for Multilabel Prediction Correction}

\author{
Weipeng Huang\IEEEauthorrefmark{1}\orcidlink{0000-0003-4620-6912}, % <-this % stops a space
Qin Li\IEEEauthorrefmark{1}\orcidlink{0000-0003-0902-9371},  
Yang Xiao\orcidlink{0000-0003-1410-0486}, 
Cheng Qiao\orcidlink{0000-0003-2887-5549}, 
Tie Cai,  
Junwei Liang\IEEEauthorrefmark{2}\orcidlink{0000-0003-1999-0254}, 
Neil J. Hurley\orcidlink{0000-0001-8428-2866},
Guangyuan Piao\orcidlink{0000-0003-0516-2802} 

\IEEEcompsocitemizethanks{
\IEEEcompsocthanksitem \IEEEauthorrefmark{1}  
Equal contributions.
\IEEEcompsocthanksitem \IEEEauthorrefmark{2} 
Corresponding author.
}

\IEEEcompsocitemizethanks{
\IEEEcompsocthanksitem 
W. Huang, Q. Li, T. Cai, and J. Liang are with the School of Computer Science and Software Engineering, Shenzhen Institute of Information Technology, Guangdong, China. Email: \{weipenghuang, liqin, cait, jwliang\}@sziit.edu.cn 
\IEEEcompsocthanksitem 
Y. Xiao is with the State Key Lab of Integrated Service Networks, School of Cyber Engineering, Xidian University, Shaanxi, China. Email: yxiao@xidian.edu.cn
\IEEEcompsocthanksitem
C. Qiao is with the Cyberspace Institute of Advanced Technology, Guangzhou University, Guangdong, China. Email: mcheng.qiao@gmail.com
\IEEEcompsocthanksitem N. J. Hurley is with the School of Computer Science, University College Dublin, Dublin, Ireland. Email: neil.hurley@ucd.ie
\IEEEcompsocthanksitem G. Piao is an independent researcher. Email: parklize@gmail.com

\IEEEcompsocthanksitem The code is available at 
\url{https://github.com/huangweipeng7/lsnpc}. 
}

}

% The paper headers
\markboth{Journal of \LaTeX\ Class Files,~Vol.~14, No.~8, August~2021}%
{Shell \MakeLowercase{\textit{et al.}}: A Sample Article Using IEEEtran.cls for IEEE Journals}

% \IEEEpubid{0000--0000/00\$00.00~\copyright~2021 IEEE}
% Remember, if you use this you must call \IEEEpubidadjcol in the second
% column for its text to clear the IEEEpubid mark.

\maketitle
% \begingroup\renewcommand\thefootnote{\textsection}
% \footnotetext{Equal contribution}
% \endgroup

\begin{abstract}
Noise in data appears to be inevitable in most real-world machine learning applications and would cause severe overfitting problems.
Not only can data features contain noise, but labels are also prone to be noisy due to human input.
In this paper, rather than noisy label learning in multiclass classifications, we instead focus on the less explored area of noisy label learning for multilabel classifications.
Specifically, we investigate the post-correction of predictions generated from classifiers learned with noisy labels.
The reasons are two-fold.
Firstly, this approach can directly work with the trained models to save computational resources.
Secondly, it could be applied on top of other noisy label correction techniques to achieve further improvements.
To handle this problem, we appeal to deep generative approaches that are possible for uncertainty estimation.
Our model posits that label noise arises from a stochastic shift in the latent variable, providing a more robust and beneficial means for noisy learning.
We develop both unsupervised and semi-supervised learning methods for our model.
% Finally, the extensive empirical study presents solid evidence to that our approach outperforms state-of-the-art methods across various settings.
The extensive empirical study presents solid evidence to that our approach is able to consistently improve the independent models and performs better than a number of  existing methods  across various noisy label settings.
% \todo[inline]{I changed this sentenece a bit. The old version said our approach outperofrmed SOTA approaches. Howecer, there are not many SOTA approaches.}
Moreover, a comprehensive empirical analysis of the proposed method is carried out to validate its robustness, including sensitivity analysis and an ablation study, among other elements.
\end{abstract}

\begin{IEEEkeywords}
Noisy Labels, Noisy Label Correction, VAE, Deep Generative Models.
\end{IEEEkeywords}

\section{Introduction}
\IEEEPARstart{M}{ultilabel} classification is a task where a single data point can be associated with multiple labels~\cite{Bogatinovski2022comp}.
In contrast, multiclass classification assigns only one single label to each data point, and is thought of as a simpler task~\cite{liu2017easy}. 
For modern deep learning problems, in particular for multilabel classifications, model performance is hugely impacted by the data and its labeling quality. %\todo{Github repo below currently is private, are we submitting separate zip file?} 
However, label noise can be generated through various means and is often inevitable when creating large datasets, particularly due to human errors~\cite{welinder2010multidimensional,xiao2015learning,cheng2020learning,han2018co,liu2020early,wang2020relaxed}.
For instance, inconsistencies may occur because multiple labeling experts, working independently, have differing subjective interpretations of the labeling guidelines.
On the other hand, implementing a shared labeling process that relies on majority voting can prove to be costly for large and complex data.
It is therefore crucial to tackle the challenges posed by noisy labels in multilabel classification tasks.

So far, the research of noisy label learning has mostly focused on the problem of multiclass classification, including~\cite{welinder2010multidimensional,xiao2015learning,cheng2020learning,han2018co,liu2020early,wang2020relaxed,Bae2022from,lu2023selc,Zhong2023Neighbour,jiang2024leveraging,yang2022estimating,li2021provably,tu2023learning}, to name but a few. 
As highlighted in the survey~\cite{Song2023learning}, label noise in multilabel classification tasks is more difficult to handle for two main reasons: 
1) learning the label correlation from noisy labels is non-trivial to settle; 
2) more importantly however, the label sparsity/imbalance leads to a more challenging situation.
Consequently, research of noisy label learning in multilabel classification has been less active.
In particular, Xia et al.~\cite{xia2023holistic}, enhancing the method of~\cite{Li2022est}, propose a statistical score to leverage the label correlation for noisy label correction during the training phase, before the model overfits prohibitively.
More recently, Chen et al.~\cite{chen2024unm} propose a unified framework incorporating semantic embedding and label embedding to design label noise resilient models.
To the best of our knowledge, there is no prior work exploring how predictions from models built with noisy multilabels can be corrected.

Our goal is to develop a general framework which is capable of correcting the predictions made by a classifier trained with noisy labels, for minimizing the gap to the true labels. 
Bae et al.~\cite{Bae2022from} first propose the technical path of calibrating the noisy predictions (namely post-processing). Their model---noisy prediction calibration (NPC)---copes with noisy labels for multiclass classifications. 
This stream of methods preserves two advantages: 1) we can apply it to the pre-trained models without easy access to re-training; 2) we can employ it on top of other noisy label handling techniques to acquire further improvements. 
In NPC~\cite{Bae2022from}, a deep generative approach which adopted and a variational auto-encoder (VAE)~\cite{Kingma2014vae} was proposed to infer the generative model.
The success of this work heavily relies on the property of multiclass classification and thus cannot be trivially extended to the case of multilabel classification.
We will fully characterize this disconnection in~\cref{sec:ext}, and present empirical evidence to support our statements in~\cref{sec:ext_exp}. 

This work adheres to the deep generative modeling procedure that first establishes a parameterized generative process involving the use of deep neural networks.
Incorporating uncertainty by considering a range of probability distributions proves to be effective in combating the impact of noise.
Thereafter, we estimate the corresponding parameters through learning from the  observed data.
In a typical VAE, one usually defines a latent variable $\z$ which generates the random variable matching the observations.
One may regard this latent variable as the deep factors for reconstructing the observations.
Alternatively, it can be thought of as the clustered features of the observations~\cite{Kingma2014vae,kingma2014semi}.
In our scenario, $\z$ is responsible for generating the variable of (noisy) labels. 
The heuristics of our work is clear: the noisy labels are actually generated due to the shift in latent space such that $\z$ turns to the shifted one $\zhat$.  
Despite being rarely investigated~\cite{engel2018latent}, latent variable shift is a very intuitive yet valid approach, which perfectly suits our scenario. 
The advantage of using this approach will reasonably mitigate the impact of label correspondence and imbalance, as we indeed attempt to capture the underlying factors of the label noise generation.
Once the shifted latent variable still locates in the right latent space, the generated label noise will also follow the pattern (in particular for the label correspondence and imbalance) of the true labels. 
With careful design of the generative process, we are able to develop a learning process that helps correct the noisy predictions to get closer to the true predictions.
Additionally, given the nature of the model, it leaves room for us to also develop a Bayesian-flavored semi-supervised learning paradigm, following the methodology outlined in~\cite{kingma2014semi}.
Unlike the setting in~\cite{xia2023holistic}, we posit there exists a small set of clean data for validation, which is affordable by most companies or organizations and thus a pragmatic assumption.
Moreover, we theoretically unveil the connection between our model objective function and the goal of learning the true latent variables.

Our contributions are highlighted here.
Firstly, we conduct a thorough analysis to understand why extending the problem from the multiclass to the multilabel scenario is not a trivial task for NPC.
Secondly, we propose a novel deep generative model for post-processing the noisy predictions in multilabel classification.
Since our \underline{n}oisy \underline{p}rediction \underline{c}orrection focuses on the \underline{l}atent variable \underline{s}hift, we thereafter name it \method{}.
This approach roots from the deep Bayesian methodology which is also a less explored branch of methods in handling noisy label corrections. 
In addition, we theoretically analyze the properties of the model which further justify our model design.
Finally, extensive experiments demonstrate that our model achieves significantly solid improvements for all the examined datasets and noise settings.  

% The rest of the paper is organized as follows. 
% We elaborate on the literature in~\cref{sec:related}.
% Later,~\cref{sec:problem} formalizes the preliminaries for the problem.
% Further in~\cref{sec:ext}, we characterize the issue of direct extensions to NPC.
% \cref{sec:model} is devoted to the detailed analysis of our model.
% \cref{sec:experiments} conducts the empirical study.
% Finally,~\cref{sec:conclusion} concludes the paper. 

\section{Related Works}
\label{sec:related}
Noisy labels are inevitable in realworld datasets, posing a significant challenge to the generalization ability of deep neural networks trained on them. These deep neural networks can even easily fit randomly assigned labels in the training data, as demonstrated in~\cite{zhang2021understanding}. To tackle this challenge, many studies have been proposed for learning with noisy labels. For example, noise-cleansing methods primarily aim to separate clean data pairs from the corrupted dataset using the output of the noisy classifier~\cite{malach2017decoupling,wang2021proselflc,kim2021fine,han2018masking,tanaka2018joint,han2018co,yu2019does,wei2020combating,zheng2021meta,zheng2020error}. 
Another line of research focuses on designing either explicit regularization or robust loss functions for training to mitigate the problem of noisy labels~\cite{liu2020early,xia2020robust,zhang2018generalized,wang2019symmetric,ma2020normalized}. 
To explicitly model  noise patterns, another branch of research suggests using a transition matrix $T$ to formulate the noisy transition from true labels to noisy labels~\cite{patrini2017making,yao2020dual,zhang2021learning,xia2020part,berthon2021confidence}. 
Different from these lines of research, Bae et. al~\cite{Bae2022from} introduced NPC (Noisy Prediction Calibration), which is a new branch of method working as a post-processing scheme that corrects the noisy prediction from a pre-trained classifier to the true label.

Despite these endeavors of tackling learning with noisy labels, a recent survey~\cite{song2022learning} points out that the majority of the existing methods are applicable only for a
single-label multiclass classification problem, and more research is required for the multilabel classification problem where each example can be associated with multiple true class labels. 
Our approach focuses on multilabel classification along with recent studies.

More recently, several works have been proposed for multilabel classification with noisy labels.
For example, HLC~\cite{xia2023holistic} uses instance-label and label dependencies in an example for follow-up label correction during training. 
UNM~\cite{chen2024unm} uses a label-wise embedding network that semantically aligns label-wise features and label embeddings in a joint space and learns the co-occurrence of multilabels. The label-wise embedding network cyclically adjusts the fitting status to distinguish the clean labels and noisy labels, and generate pseudo labels for the noisy ones for training. 
Our approach is orthogonal to these studies, and can be seen as a post-processor similar to NPC~\cite{Bae2022from} for multiclass classification. 
In other words, our method can be used to correct the predictions of a pre-trained classifier such as HLC, and further improve the performance as a post-processor as we show in~\cref{sec:experiments}.

\section{Problem Setup} 
\label{sec:problem}
This section formally describes the technical problem.
Let us consider a multilabel classification task containing $k$ labels.
Let $\x \in \mathcal{X} \subseteq \mathbb{R}^d$ be a $d$-dimensional data point and $\y \in \mathcal{Y} = \{0, 1\}^k$ be the \emph{true} corresponding label vector where each element is binary---$1$ indicates that the data point contains the corresponding label, whereas $0$ indicates the opposite.
The dataset $\mathcal{D}$ is defined by $\mathcal{D} = \{(\x_i, \y_i)\}_{i=1}^n$.
Given our task for handling the noisy labels, the observable labels $\tilde{\y}$ might be shifted from the true label $\y$.
We thus denote the observed dataset as $\tilde{\mathcal{D}}$ where $\tilde{\mathcal{D}} = \{(\x_i, \tilde{\y}_i)\}_{i=1}^n$.
Finally, we denote the Kullback-Leibler divergence (KL-divergence) by $\kl$.
 
The focus of this paper is the prediction correction for a pre-trained classifier.
The main reason is that this stream of approaches does not conflict with the most other existing methods which try to denoise during the training phase.
In practice, even with the denoising training techniques, the trained classifiers are probably still biased and can benefit from our approach.
We now consider an arbitrary classifier $h: \mathcal{X} \mapsto \mathcal{Y}$ which is trained using the observed data $\tilde{\mathcal{D}}$.
We write the prediction as $\yhat = h(\x)$.
If the model $h(\cdot)$ is stochastic, $\yhat \sim h(\x)$ can be interpreted as the predictive posterior probability
\begin{align*}
& \yhat \sim p(\yhat \mid \x, h)\\
& p(\yhat \mid \x, h)\equiv p(\yhat \mid \x, \tilde{\mathcal{D}}) \eqqcolon p_h(\yhat \mid \x) \,,
\end{align*}
as $h(\cdot)$ is trained on $\tilde{\mathcal{D}}$. 
Compared with the raw noisy labels $\tilde{\y}$, $p_h(\yhat \mid \x)$ can be regarded as an \emph{approximated distribution} over the noisy labels and thus enable the Bayesian modeling.
The task is to learn a label transition model $\mathcal{C}(\cdot)$ for a noisy model $h(\cdot)$ which takes the data as input and outputs the corrected labels. 
% The distribution $p(\y  \mid \x, \yhat)$ is a noisy prediction calibration that provides a post-processing adjustment from $\yhat \to \y$ without altering the pre-trained classifier $h(\cdot)$.
% \iffalse
% %NEIL: I don't like this. We don't use such an empicial risk formulation and we don't have access to the
% % true labels \y_i, so what is this?
% Provided $\ell(\cdot)$ a proper loss function, we aim at minimizing the empirical risk $R_{emp}(T; h)$ such that
% \begin{align}
% R_{emp}(T; h) &= \frac 1 n \sum_{i=1}^n \ell(\y_i^*, \y_i) \,,\\
% R_{emp}(T; h) &= \frac 1 n \sum_{i=1}^n \ell(\mathcal{C}(\x;h), \y_i) \,
% \end{align}
% In this empirical risk function, $h(\cdot)$ is also an input.
% \fi
Finally, we emphasize the notation in~\cref{tb:notation} for further clarification and reference. 
\begin{table}[ht]
\centering
\caption{Notation Clarification for the Label Variables}\label{tb:notation}
\begin{tabularx}{1\columnwidth}{lX}
\toprule
$\y$ & true labels \\
$\ytilde$ & labels annotated by experts which potentially contain noise \\
$\yhat$ & labels that are predicted through a model $h(\cdot)$ which is trained on the noisy set $\tilde{\cal D}$ (observed labels) \\
\midrule
\bottomrule
\end{tabularx} 
\end{table}

\section{Extension of NPC}
\label{sec:ext}
NPC~\cite{Bae2022from} focuses on multiclass classification and thus we denote the label variable by $y$.
Considering the reconstructed label for $\x$ is $y_{recon}$, the VAE loss is defined as 
\begin{align}
  \mathcal{L}_{recon}(y_{recon}, \hat{y}) + \kl[ q(y_{recon} \mid \x, \hat{y}) \mid\mid p_h(\hat{y} \mid \x) ] \,.
\end{align}
Here, $\mathcal{L}_{recon}$ is the reconstructed loss which is a cross entropy for the categorical distribution (the primary choice of distribution for handling multiclass classification).
The second term is the model regularization where the authors assume a prior of generating $\hat{y}$ by the noisy classifier.

NPC posits that, given an input $\x$, the latent variable for reconstructing the noisy label $\hat{y}$ is the true label $y$.
However, the connection between the latent variable, i.e., the true label $y$, and the noisy label $\hat{y}$ is constructed through a parameterized neural network.
This lacks a directional connection between the latent variable and the noisy label and therefore may lead to an enormous space for parameter search.
In a traditional Bayesian modeling process, one may design and combine the best existing statistical models to describe the transformation from $\hat{y}$ to $y$.
Also, most of the classical Bayesian models have only a few parameters to learn, and their contraction rates and asymptotics~\cite{walker2004modern} are well proven, which makes them more predictable and directed models. 
From an optimization perspective, the success of NPC thus lies in the regularization term which restricts the decoder $q(y_{recon} \mid \x, \hat{y})$ to be close to the noisy classifier $p_h(\hat{y} \mid \x)$, in which $\hat{y}$ serves as the ground truth.
Nevertheless, the NPC allows the model to search for a distribution closer to the classifier's prediction distribution while a reasonable amount of uncertainty allows the model to search for better parameters for the distribution.

According to the rationale, the extension to a multilabel case will fail since the regularization term will be greatly impacted by the sparsity of the labels.
The only choice for the multilabel variables $\y$ and $\yhat$ will be the multivariate Bernoulli distribution.
For two discrete multivariate Bernoulli distributions $P$ and $Q$, the KL-divergence is 
\begin{align*}
\kl[ P \mid\mid Q] = \sum_i p_i \log \frac{p_i}{q_i} 
+ (1-p_i) \log \frac {1-p_i}{1-q_i}  \,.
\end{align*}
Imagine that there are e.g. 20 labels, while, in each instance, there are only 1-3 labels assigned the value of 1.
If two multivariate Bernoulli both set very high probabilities for the labels that are assigned 0, the KL-divergence between them is still small as most of the labels are matched and thus the overall difference is amortized. 
Then the regularization term loses its ability to guide the parameter search in the space. 

To further support our analysis, we implemented this extension and demonstrated in~\cref{sec:experiments} that it does not focus in the right learning directions but wanders stochastically instead.
Our implementation followed~\cite{wang2020relaxed} to employ a Normal copula~\cite{nelsen2006introduction} to ensure that the samples from the multivariate Bernoulli distributions capture the label correlation.

\section{NPC Focusing on Latent Variable Shift (LSNPC)}
\label{sec:model}
% This work proposes to cope with the label noise through a deeper connection.
% More precisely, we regard the generation of label noise as a random shift in the latent space. 
% Assume that a latent variable $\z$ (along with the observation $\x$) is a main factor involved in generating the true label, then $\zhat$ is a randomly shifted variable that plays a role in generating the observed labels (which are potentially noisy). 
% In line with the Bayesian modeling procedure, we first detail its generative process. 
% Subsequently, we analyze the posteriors of the model under unsupervised and supervised configurations.

Our approach regards the generation of label noise as a random shift in the latent space. 
Assume that a latent variable $\z$ (along with the observation $\x$) is a main factor involved in generating the true label, then $\zhat$ is a randomly shifted variable that plays a role in generating the observed labels (which are potentially noisy). 
In line with the Bayesian modeling procedure, we first detail its generative process and subsequently analyze the posteriors of the model for different learning paradigms.

\subsection{Generative Process}
 
\begin{figure}[t]
  \centering
  \subfloat[Unsupervised]{
  \begin{tikzpicture}
    % Define nodes
    \node[latent]   (y) {$\y$};
    \node[latent, left=of y, xshift=-0.5cm] (z) {$\z$};
    \node[latent, below=of z] (zhat) {$\zhat$};
    \node[obs, below=of z, xshift=.8cm, yshift=.8cm]  (x) {$\x$};
    \node[obs, below=of y]  (yhat) {$\yhat$};
  
    % Connect the nodes
    \edge {x,z} {y} ; %
    \edge {z} {zhat} ; %
    \edge {x, zhat} {yhat} ; % 
  \end{tikzpicture}
  }
  \qquad \qquad
  \subfloat[Supervised]{
    \begin{tikzpicture}
    % Define nodes
    \node[obs]                               (y) {$\y$};
    \node[latent, left=of y, xshift=-0.5cm] (z) {$\z$};
    \node[latent, below=of z] (zhat) {$\zhat$};
    \node[obs, below=of z, xshift=.8cm, yshift=.8cm]  (x) {$\x$};
    \node[obs, below=of y]  (yhat) {$\yhat$};
  
    % Connect the nodes
    \edge {x,z} {y} ; %
    \edge {z} {zhat} ; %
    \edge {x, zhat} {yhat} ; %
    % Plates 
  \end{tikzpicture}
  }
  \caption{The process of the noisily labelled data generation. The gray background indicate that the variable is observed. \label{fg:generative}}
\end{figure}
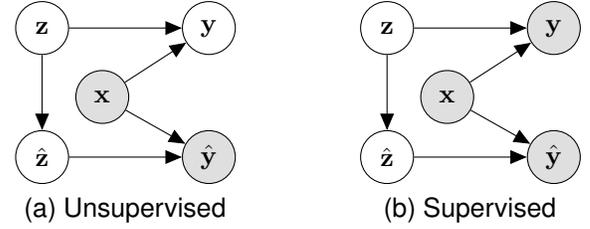
\cref{fg:generative} exhibits the graphical model of the generative process.
The left graph is for the unsupervised setting while the right graph is for the supervised setting.
The only difference is whether the true labels $\y$ is observed.
Next, the latent variable $\z \in \mathcal{Z} \subseteq \mathbb{R}^m$ is the variable for generating the true labels $\y$, while $\zhat \in \mathcal{Z}$ is the shifted latent variable, which combines with $\x$ to generate the noisy labels $\yhat$.
In our scenario, $\yhat$ is a sample drawn from the pre-trained classifier $h(\x)$.
% \footnote{Although $\yhat$ should better be fixed in principle, we still sample it from the pre-trained classifier as a lightweight data augmentation trick}. 
This modeling strategy follows that of the Bayesian linear regression~\cite{Castillo2015bayesian}, which omits the generation of the observation $\x$ but treats it as an input covariate, because this is not the component of interest to our task.
Following the strict Bayesian convention to reconstruct $\x$ is unnecessary and might raise the difficulty of learning of true labels.
The rationale is that its role in the objective function could potentially distract the learning process from focusing on learning the true labels.

We depict the specifications in~\cref{fg:generative} before elucidating all the details:
\begin{align*}
\z &\sim \mathrm{Normal}(\m{0}, \m{I}) \\
\zhat \mid \z &\sim \mathrm{Student}(g_{\psi}(\z), \m{I}, \nu_0) \\ 
\y \mid \z, \x &\sim \mathrm{Bernoulli}(g_{\phi}(\x, \z)) \quad \\
\yhat \mid \zhat, \x &\sim \mathrm{Bernoulli}(g_{\phi}(\x, \zhat)) \,.
\end{align*}
The latent variable $\z$ is a typical multivariate Normal distribution with zero mean and identity covariance matrix.
Then, $\zhat$ is a random variable of the potentially \enquote{shifted neighbor} of $\z$.
We design $\zhat$ to be drawn from a multivariate Student distribution with a hyperparameter $\nu_0$.
The heavy tailed property of the distribution helps the whole process become more robust and in particular effective against noise~\cite{GAO2021on,li2021t,Li2020a}.  
The mean of this Student distribution is transformed through a decoder $g_{\psi}(\z)$ where $\psi$ is the corresponding parameter set. 
This step models the process of shifting $\z$.
Also, Student VAEs~\cite{Takahashi2018student,Abiri2019vae} have been studied and the corresponding reparameterization tricks are available.

Other than the latent variables, $\y$ is the random variable for true labels.
We define $\yhat$ as the random variable representing the observed labels from the data which can be either clean or noisy. 
As defined, the true latent variable $\z$ is responsible for constructing the true labels $\y$, along with $\x$.
On the contrary, the observed labels $\yhat$ are then constructed based on $\x$ and $\zhat$.
Therefore, $\yhat$ can be numerically close to $\y$; and in this case, the latent $\zhat$ should also be numerically close to the true latent variable $\z$.
We emphasize that the generation of $\y$ and $\yhat$ share the same multivariate Bernoulli distribution.
Apart from that, they also share same the decoder function $g_{\phi}(\cdot)$, parameterized with $\phi$.

It is noteworthy that $\zhat$ should locate in the same space as $\z$ since it is a neighbor of $\z$. 
Consequently, $\zhat$ operates in the same support as $\z$ and should generate the labels using the same decoder as $\z$, when $\x$ is fixed.
Finally, we define a set for all the parameters in the decoder function by $\Phi_d = \{\psi, \phi \}$ for simplicity.

\subsection{Variational Auto-Encoder}

In this subsection, we detail the proposed distributions of this generative model, crucial for learning the generative models in VAEs.
We begin with the unsupervised learning setting, followed by the supervised setting.
Combining the two parts, we can derive the semi-supervised learning solution.

\subsubsection{Unsupervised Learning}
For the unsupervised fashion where $\y$ is unobserved, we consider the following marginal probability $p(\x, \yhat)$.
Let us denote the corresponding evidence lower bound (ELBO) by 
\[
\mathrm{ELBO} = \E_{\z, \zhat \sim q(\z, \zhat \mid \x, \yhat)}\left[\frac{p(\x, \yhat, \z, \zhat)}{q(\z, \zhat \mid \x, \yhat)} \right] \,.
\]
Hence, we show
\begin{align}
\label{eq:unsup_obj}
\log p(\x, \yhat)
&\ge \E_{\z, \zhat \sim q(\z, \zhat \mid \x, \yhat)}\left[\frac{p(\x, \yhat, \z, \zhat)}{q(\z, \zhat \mid \x, \yhat)} \right] \notag \\
&= \E_{\zhat \sim q(\zhat \mid \x, \yhat)} [\log p(\yhat \mid \x, \zhat)] \notag \\
&\quad + \E_{\z, \zhat \sim q(\z, \zhat \mid \x,\yhat)} [\log p(\zhat \mid \z) + \log p(\z)] \notag \\
&\quad - \E_{\zhat \sim q(\zhat \mid \x, \yhat)}[ \log q(\zhat \mid \x, \yhat)] \notag \\
&\quad - \E_{\z, \zhat \sim q(\z, \zhat \mid \x,\yhat)}[\log q(\z \mid \zhat)] \,
\end{align}
where  
\begin{align}
q(\z, \zhat \mid \x, \yhat) = q(\z \mid \zhat) q(\zhat \mid \x, \yhat) \,.
\end{align}
Furthermore, we specify
\begin{align} 
q(\z \mid \zhat)
&= \mathrm{Normal}(\z; \mu_{\kappa}(\zhat), \mathrm{diag}(\sigma^2_{\kappa}(\zhat))) \label{eq:unsup_zhat0} \\
q(\zhat \mid \x, \yhat)
&= \mathrm{Student}(\zhat; \mu_{\theta}(\x, \yhat), \mathrm{diag}(\sigma^2_{\theta}(\x, \yhat)), \nu) \label{eq:unsup_zhat}  
\end{align}
where $\mu_{\kappa}: \mathcal{Z} \mapsto \mathcal{Z}$ and $\sigma^2_{\kappa}:\mathcal{Z} \mapsto \mathcal{Z}$ are the corresponding encoder functions for $\zhat$ with respect to mean and covariance matrix; likewise, $\mu_{\theta}: \mathcal{X} \times \mathcal{Y} \mapsto \mathcal{Z}$ and $\sigma^2_{\theta}: \mathcal{X} \times \mathcal{Y} \mapsto \mathcal{Z}$ are the encoder functions for $\x, \yhat$.
We denote the parameters for the encoder function by $\Phi_e =\{ \kappa, \theta \}$.

In the variational Bayes framework, the objective is always to maximize the ELBO~\cite{Kingma2014vae,kingma2014semi}.
It is clear that maximizing the ELBO is equivalent to minimizing the following KL-divergence:
\begin{align} 
&\max \E_{\z, \zhat \sim q(\z, \zhat \mid \x, \yhat)}\left[\frac{p(\x, \yhat, \z, \zhat)}{q(\z, \zhat \mid \x, \yhat)} \right] \notag \\
% &\equiv \min - \int_{\z} \int_{\hat \z} q(\z, \zhat \mid \x, \yhat) \log \frac{p(\x, \yhat, \z, \zhat)}{q(\z, \zhat \mid \x, \yhat)} d\z d\zhat \notag \\
&= \min \kl[q(\z, \zhat \mid \x, \yhat) \mid\mid p(\z, \zhat \mid \x, \yhat)] - \log p(\yhat, \x) \notag \\
&\equiv \min \kl[q(\z, \zhat \mid \x, \yhat) \mid\mid p(\z, \zhat \mid \x, \yhat)] \label{eq:obj_relation} \,.
\end{align} %\todo{Move Eq3 to after Eq4?} 
In the proposed distributions, we set $\nu$ as a hyperparameter rather than learning it from a parameterized encoder function, for two main reasons: 1) setting a fixed value suffices to perform well (check the empirical study section); 2) we can set a value which fits in our theoretical study. 

% Under Bayesian configurations, one might maximize the marginal probability $\log p(\x, \yhat)$.
Given~\cref{eq:unsup_obj,eq:obj_relation}, we can summarize the loss function for the unsupervised learning as 
\begin{align}
  \label{eq:unsup_obj_summarized}
  \mathcal{U}(\tilde{\mathcal{D}}) 
  \coloneqq \sum_{(\x, \_) \in \tilde{\mathcal{D}}, \yhat \sim h(\x) }\kl [q(\z, \zhat \mid \x, \yhat) \mid\mid p(\z, \zhat \mid \x, \yhat)] \, ,
\end{align}
% \todo[inline]{We probably need to fix the notation here. That is, $\tilde{\mathcal{D}}$ has no explicit connections to $\yhat$}
where $\tilde{\mathcal{D}}$ is the training set that contains the noisy labels.
 
\paragraph{Correction Phase}
We can now specify the label correction process given that \method{} has been explained.
Let us denote the corrected labels for $\x$ by $\y^*$.
Following the rhythm of Bayesian learning, we define the label correction function $\mathcal{C}(\cdot)$ as 
\begin{align} 
\label{eq:correction}
\y^*
&= \mathcal{C}(\x; h) = \E[\y \mid \x, h] \nonumber \\
&= \int \int \y \cdot p(\y \mid \x, \yhat) p_h(\yhat \mid \x) d \y d \yhat \nonumber \\
% &\approx \int \int \y \cdot q(\y \mid \x, \yhat) p_h(\yhat \mid \x) d \y d \yhat \nonumber \\
&\approx \int \int \y \left[ \int \int p(\y \mid \x, \z) q(\z \mid \zhat) q(\zhat \mid \x, \yhat) d\z d\zhat \right] \nonumber \\
&\quad \cdot p_h(\yhat \mid \x) d \y d \yhat \nonumber \\
% &= \E_{\z \sim q(\z \mid \zhat), \zhat \sim q(\zhat \mid \x, \yhat), \yhat \sim p_h(\yhat \mid \x)} [ \int \y p(\y \mid \x, \z)] d \y \notag \\
&= \E_{\y \sim p(\y \mid \x, \z),\z \sim q(\z \mid \zhat), \zhat \sim q(\zhat \mid \x, \yhat), \yhat \sim p_h(\yhat \mid \x)} [\y ] \, . 
\end{align}
% The third to fourth row consider that $\y$ is a multivariate Bernoulli random variable. 
% Here, we specify the process of label correction.
% For our model, the label correction formula in~\cref{eq:correction} can be adapted to 
% \begin{align}
% \label{eq:our_correction}
% \y^* 
% &\approx \E_{\z \sim q(\z \mid \zhat), \zhat \sim q(\zhat \mid \x, \yhat), \yhat \sim p_h(\yhat \mid \x)} [p(\y = \m{1}_k \mid \x, \z)]	\,.
% \end{align}
In practice, we approximate this equation by Monte Carlo sampling.
This correction function $\mathcal{C}(\cdot)$ remains consistent across unsupervised, supervised, and semi-supervised paradigms.

\subsubsection{Supervised Learning}
For the supervised fashion where $\y$ is observed, we consider the marginal probability which follows
\begin{align}
\label{eq:sup_obj}
&\log p(\x, \y, \yhat) \notag \\
% &\ge \int_{\z} \int_{\hat \z} q(\z, \zhat \mid \x, \y, \yhat) \log \frac{p(\x, \y, \yhat, \z, \zhat) }{q(\z, \zhat \mid \x, \y, \yhat)} d\z d\zhat \notag \\
&\ge \E_{\z, \zhat \sim q(\z, \zhat \mid \x, \y, \yhat)}\left[\frac{p(\x, \y, \yhat, \z, \zhat)}{q(\z, \zhat \mid \x, \yhat)} \right] \notag \\
% &= \E_{\z, \zhat \sim q(\z, \zhat \mid \x, \y, \yhat)} [\log p(\yhat \mid \x, \zhat) + \log p(\y \mid \x, \z) \notag \\
% &\quad + \log p(\zhat \mid \z) + \log p(\z) ] - \mathcal{H}(q(\z, \zhat \mid \x, \y, \yhat)) \notag \\
&= \E_{\z, \zhat \sim q(\z, \zhat \mid \x, \y, \yhat)} [\log p(\yhat \mid \x, \zhat) + \log p(\y \mid \x, \z) \notag \\
&\quad + \log p(\zhat \mid \z) + \log p(\z) ] - \E_{\zhat \sim q(\zhat \mid \x, \yhat, \zhat)}[ \log q(\zhat \mid \x, \yhat)] \notag \\
&\quad - \E_{\z \sim q(\z \mid \x, \y)} [\log q(\z \mid \x, \y)] \,,
\end{align} 
where 
\begin{align} 
q(\z, \zhat \mid \x, \y, \yhat)
= q(\z \mid \x, \y, \zhat) q(\zhat \mid \x, \yhat) \,.
\end{align}
% Similar to that for the unsupervised learning, we can show that
% \begin{multline}
% \min -\log p(\x, \y, \yhat) s\\
% \equiv \min \kl[q(\z, \zhat \mid \x, \y, \yhat) \mid\mid p(\z, \zhat \mid \x, \y, \yhat)]
% \end{multline}
We further propose
\begin{align}
q(\z \mid \x, \y, \zhat)
&= \eta \mathrm{Normal}(\z; \mu_{\theta}(\x, \y), \mathrm{diag}(\sigma^2_{\theta}(\x, \y))) \notag \\
&\quad + (1-\eta) q(\z \mid \zhat) \label{eq:sup_z} \\
q(\zhat \mid \x, \yhat)
&= \mathrm{Student}(\zhat; \mu_{\theta}(\x, \yhat), \mathrm{diag}(\sigma^2_{\theta}(\x, \yhat)), \nu) \,. \label{eq:sup_zhat} 
%\IEEEeqnarraynumspace
\end{align}
Apparently,~\cref{eq:sup_zhat} is identical to the proposed distribution in the unsupervised setting (\cref{eq:unsup_zhat}).
However, $q(\z \mid \x, \y, \zhat)$ is designed to be a mixture model of two weighted distributions, controlled by $\eta$.
On one hand, we are learning the distribution $q(\z \mid \zhat)$ as that in the unsupervised setting. 
This distribution is crucial because, during the correction phase, this distribution will be the sole component to rely on, since the true label $\y$ is unknown. 
On the other hand, heuristically, clean data could be used to enhance the encoder function parameters contained in $\mathrm{Normal}(\z; \mu_{\theta}(\x, \y), \mathrm{diag}(\sigma^2_{\theta}(\x, \y)))$. 
This step improves the learning of the encoder functions that can only be learned through the noisy data in the unsupervised configuration.
During our empirical exploration, we found that values of $\eta$ within $(0, 1)$ result in nearly identical performance.
The reason maybe that, provided the two distributions are drawn a sufficient number of times during learning, the parameters are able to visit the region of the search space in which the noisy predictions can be corrected to their best. 
The number of training epochs remained the same and showed negligible influence.
However, setting $\eta$ to exactly $0$ or $1$ would lead to slightly worse performance.
With $\eta=0$, we lose the opportunity to apply the clean data to correct the learning of the decoder functions which are mainly trained by the noisy data in the unsupervised part.
In contrast, setting $\eta=1$ disconnects $q(\z \mid \zhat)$, the main component for label correction in the inference phase (see~\cref{eq:correction}), from the clean data during training.  

Similarly, denoting $\mathcal{D}$ as the clean dataset for training, we define the objective function as 
\begin{align}
\mathcal{L}(\mathcal{D}) 
\coloneqq \sum_{(\x, \y) \in \mathcal{D}} \kl [q(\z, \zhat \mid \x, \y, \yhat) \mid\mid p(\z, \zhat \mid \x, \y, \yhat)] \,.
\end{align}

\subsubsection{Semi-Supervised Learning}
Following the paradigm in the work~\cite{kingma2014semi}, we define the corresponding loss function as 
\begin{align}
\min_{\bm{\Phi}} \mathcal{L}(\mathcal{D}) + \mathcal{U}(\tilde{\mathcal{D}}) \, 
\end{align}
where $\bm{\Phi} = \Phi_d \cup \Phi_e$.
However, we split the training into two consecutive training parts which is depicted in~\cref{alg:train}.
\begin{algorithm}[tp]
\caption{Semi-Supervised Training} \label{alg:train}
\begin{algorithmic}[1]
\REQUIRE{$\xi$, the learning rate}
\FOR{every epoch}
  \STATE \hfill\COMMENT{Unsupervized learning with $\tilde{\mathcal{D}}$}
  \FORALL{ batch $\tilde{\mathcal{B}}$ in the noisy data $\tilde{\mathcal{D}}$}
    \STATE $\bm{\Phi} \gets \bm{\Phi}- \xi \cdot \nabla_{\bm{\Phi}} \mathcal{U}(\tilde{\cal{B}})$ 
  \ENDFOR
  \STATE \hfill\COMMENT{Semi-Supervised learning if $\mathcal{D}$ is available}
  \FORALL{batch $\mathcal{B}$ in the clean data $\mathcal{D}$}
    \STATE $\bm{\Phi} \gets \bm{\Phi} - \xi \cdot \nabla_{\bm{\Phi}}\mathcal{L}(\cal{B})$ 
  \ENDFOR
\ENDFOR
\end{algorithmic}
\end{algorithm} 
This suffices to leverage the clean data for correcting the learning.

\subsection{Theoretical Properties}
In this section, we analyze the theoretical properties of our proposed method.
Due to the space limit, we defer the proofs to the supplemental material.

In our loss function under the unsupervised setting, when considering a single data point $\x$, the (expected) KL-divergence between the proposal $q(\z \mid \zhat)$ and the marginalized posterior $p(\z \mid \x, \yhat)$ is also being minimized.
\begin{theorem} \label{lem:upper}
Minimizing the objective in~\cref{eq:unsup_obj_summarized} is equivalent to minimizing the upper bound of the expected KL-divergence between the marginalized true posterior on $\z$ and the proposed marginalized one.
That is, for a given data point $\x$, we obtain
\begin{multline}
  \E_{\zhat \sim q(\zhat \mid \x, \yhat)}[\kl[q(\z \mid \zhat) \mid\mid p(\z \mid \x, \yhat) ]] \\
  \le \kl[ q(\z, \zhat \mid \x, \yhat) \mid\mid p(\z, \zhat \mid \x, \yhat)] \,.
\end{multline}
\end{theorem}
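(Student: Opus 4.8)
The plan is to start from the right-hand side---the joint divergence $\kl[q(\z, \zhat \mid \x, \yhat) \mid\mid p(\z, \zhat \mid \x, \yhat)]$---and peel off a non-negative term so that the quantity on the left is exposed. The structural input I would exploit is the variational factorization $q(\z, \zhat \mid \x, \yhat) = q(\z \mid \zhat)\,q(\zhat \mid \x, \yhat)$ from~\cref{eq:unsup_zhat0,eq:unsup_zhat}, together with the chain rule for the KL-divergence. Concretely, I would factor both arguments along the $\zhat$-then-$\z$ ordering, writing $p(\z, \zhat \mid \x, \yhat) = p(\z \mid \zhat, \x, \yhat)\,p(\zhat \mid \x, \yhat)$, substitute into the logarithm inside the joint divergence, and split the expectation. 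Using $q(\z \mid \zhat, \x, \yhat) = q(\z \mid \zhat)$ from the assumed factorization, this gives the identity
\begin{align*}
\kl[q(\z, \zhat \mid \x, \yhat) \mid\mid p(\z, \zhat \mid \x, \yhat)]
&= \kl[q(\zhat \mid \x, \yhat) \mid\mid p(\zhat \mid \x, \yhat)] \\
&\quad + \E_{\zhat \sim q(\zhat \mid \x, \yhat)}\big[\kl[q(\z \mid \zhat) \mid\mid p(\z \mid \zhat, \x, \yhat)]\big] \,.
\end{align*}
Since the first summand is itself a KL-divergence and hence non-negative, discarding it immediately yields an upper bound, provided the remaining inner term can be matched to the left-hand side.

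The next step is to reconcile the inner conditional posterior $p(\z \mid \zhat, \x, \yhat)$ with the target posterior. Here I would read off the conditional independencies from the generative graph in~\cref{fg:generative}: the chain $\z \to \zhat \to \yhat$ (with $\zhat$ observed it blocks the only active path to $\yhat$) and the fact that $\z$ is generated independently of $\x$ give $\z \perp \yhat \mid \zhat$ and $\z \perp \x \mid \zhat$, so that $p(\z \mid \zhat, \x, \yhat) = p(\z \mid \zhat)$. Substituting this back and dropping the non-negative marginal term in $\zhat$ then delivers $\E_{\zhat}\big[\kl[q(\z \mid \zhat) \mid\mid p(\z \mid \zhat)]\big] \le \kl[q(\z, \zhat \mid \x, \yhat) \mid\mid p(\z, \zhat \mid \x, \yhat)]$.

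I expect the \emph{main obstacle} to be precisely the passage from this conditioned posterior $p(\z \mid \zhat)$ to the marginalized posterior $p(\z \mid \x, \yhat)$ written in the statement: the chain rule naturally produces the divergence against $p(\z \mid \zhat)$, whereas the claim phrases it against $p(\z \mid \x, \yhat) = \int p(\z \mid \zhat')\,p(\zhat' \mid \x, \yhat)\,d\zhat'$, and the inequality $\E_{\zhat}[\kl[q(\z\mid\zhat)\mid\mid p(\z\mid\x,\yhat)]] \le \E_{\zhat}[\kl[q(\z\mid\zhat)\mid\mid p(\z\mid\zhat)]]$ is not a generic identity. I would resolve this in one of two ways: either interpret the ``marginalized true posterior'' as the model's reconstruction posterior, which under the graph is exactly the $\zhat$-conditioned object (making the identification immediate), or, if one insists on the literal $\zhat$-marginalized form, invoke convexity of $\kl$ in its second argument (Jensen) to control $\kl[q(\z \mid \zhat) \mid\mid p(\z \mid \x, \yhat)]$ by a mixture of the per-$\zhat'$ divergences. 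This is the delicate point where the graphical structure must be used carefully, and it is the step I would write out in full detail.

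Finally, I would assemble the pieces---the chain-rule identity, the non-negativity of the discarded $\zhat$-marginal divergence, and the posterior identification---to obtain the stated bound for a single $\x$. Summing over the points of $\tilde{\mathcal{D}}$ (with $\yhat \sim h(\x)$) then shows that the unsupervised objective in~\cref{eq:unsup_obj_summarized} dominates the aggregated expected posterior divergence on $\z$, so that minimizing the former minimizes an upper bound on the latter, which is the sense of ``equivalence'' asserted in the theorem.
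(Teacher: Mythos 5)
Your chain-rule decomposition and the d-separation argument giving $p(\z \mid \zhat, \x, \yhat) = p(\z \mid \zhat)$ are both sound, but the proof does not close: you correctly identify the passage from $\E_{\zhat \sim q(\zhat\mid\x,\yhat)}\bigl[\kl[q(\z\mid\zhat)\,\|\,p(\z\mid\zhat)]\bigr]$ to $\E_{\zhat \sim q(\zhat\mid\x,\yhat)}\bigl[\kl[q(\z\mid\zhat)\,\|\,p(\z\mid\x,\yhat)]\bigr]$ as the crux, and neither of your two escape routes delivers it. Reinterpreting the ``marginalized true posterior'' as $p(\z\mid\zhat)$ changes the statement being proved. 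Convexity of the KL-divergence in its second argument runs the wrong way for your purposes: it bounds $\kl[q(\z\mid\zhat)\,\|\,p(\z\mid\x,\yhat)]$ \emph{above} by $\E_{\zhat'\sim p(\zhat'\mid\x,\yhat)}\kl[q(\z\mid\zhat)\,\|\,p(\z\mid\zhat')]$, a double expectation in which the mixture index $\zhat'$ is drawn from $p$ independently of the outer $\zhat\sim q$; this is not comparable to the chain-rule remainder $\E_{\zhat\sim q}\kl[q(\z\mid\zhat)\,\|\,p(\z\mid\zhat)]$, where the two occurrences of $\zhat$ are tied together. Indeed the discrepancy $\E_{\zhat\sim q}\int q(\z\mid\zhat)\log\tfrac{p(\z\mid\x,\yhat)}{p(\z\mid\zhat)}\,d\z$ between the two sides has no definite sign once $q(\zhat\mid\x,\yhat)\neq p(\zhat\mid\x,\yhat)$, so the inequality you need at this step is not generically true and cannot be recovered by Jensen.

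For comparison, the paper's proof never produces the conditional $p(\z\mid\zhat)$ at all. It writes the joint divergence as an integral, replaces the denominator $p(\z,\zhat\mid\x,\yhat)=p(\z\mid\x,\yhat)\,p(\zhat\mid\z,\x,\yhat)$ by the marginal $p(\z\mid\x,\yhat)$ alone (asserting this only increases the integrand), and then splits off the leftover term $\int q(\zhat\mid\x,\yhat)\log q(\zhat\mid\x,\yhat)\,d\zhat$ as a quantity asserted to be non-negative; that is, it attacks the $\zhat$-marginalization head-on through a density inequality rather than through the chain rule. Your route is structurally cleaner up to the point where it stalls, but as written it establishes a bound on $\E_{\zhat}\kl[q(\z\mid\zhat)\,\|\,p(\z\mid\zhat)]$ rather than on the quantity in the theorem; to match the statement you would need an additional argument relating $p(\z\mid\zhat)$ to $p(\z\mid\x,\yhat)$ under the $q$-expectation, which you have not supplied.
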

Therefore, the distribution $q(\z \mid \zhat)$ learned through \method{} is guaranteed to move close to the true marginal posterior $p(\z \mid \x, \yhat)$, facilitating the approximation of the true posterior.

Moreover, we analyze the gap between the marginalized approximate posterior $q(\z \mid \x, \yhat)$ under the condition that the observed labels $\yhat$ are identical to the ground-truth labels and the scenario in which $\yhat$ corresponds to other arbitrary label values.
It is crucial to investigate the impact of $\yhat$ on learning $\z$, as $\yhat$ directly influences only the learning of $\zhat$.
Suppose the clean ground-truth values of $\x$ are denoted by $\y_0$ and a vector of other label values is denoted by $\y_1$.
In VAEs, the distribution $q(\z \mid \x, \yhat=\y_0)$ could well differ from $q(\z \mid \x, \yhat=\y_1)$. 
We are interested in a scenario where the noisy labels should be able to help learn the true latent variable $\z$ or a close neighbor, and thus the distribution distance of $q(\z \mid \x, \yhat=\y_0)$ and $q(\z \mid \x, \yhat=\y_1)$ should be bounded. 
The distance between these distributions associates with the robustness of the selected proposal distributions using a noisy set of labels along with $\x$ to infer the true latent variable $\z$.
It is expected that the distance between such two distributions is constrained.

Let $\mathcal{S}_{\kappa}$ and $\mathcal{S}_{\theta}$  represent the space for the parameters, $\kappa$ and $\theta$ respectively. 
Furthermore, let $\Delta(\cdot, \cdot)$ be a symmetric metric in binary $k$-dimensional space $\mathcal{Y}$.
The value will be greater than 1 if two label values being compared are not identical; otherwise, the value will be 0.
We posit the following assumptions.
During any phase in the training process, the following conditions hold.
\begin{assumption} \label{assmp:sigma_L_lip}
We assume that for the encoder function $\sigma_{\theta}(\cdot, \cdot)$, given that $\forall \x \in \mathcal{X}, \theta \in \mathcal{S}_{\theta}, \y_0, \y_1 \in \mathcal{Y}$, $\exists M: 0 < M < \infty$ such that,  
\begin{align}
  \max_{j=1, \dots, m} \frac{\sigma_{\theta}^2(\x, \y_0)_{j}}{\sigma_{\theta}^2(\x, \y_1)_{j}} \le M \Delta(\y_0, \y_1) \,.
\end{align}
\end{assumption} 

\begin{assumption} \label{assmp:mu_L_lip}
We posit the Lipschitz continuity for the encoder function $\mu_{\kappa}$ over the label space $\mathcal{Y}$. 
Given that $\forall \x \in \mathcal{X}, \kappa \in \mathcal{S}_{\kappa}, \y_0, \y_1 \in \mathcal{Y}, \exists L: 0 < L < \infty$,  
\begin{align} 
\| \mu_{\kappa}(\x, \y_0) - \mu_{\kappa}(\x, \y_1) \|_2
&\le |\mu_{\kappa}(\x, \y_0) - \mu_{\kappa}(\x, \y_1)| \notag \\
&\le L \Delta(\y_0, \y_1) \,.
\end{align}
\end{assumption} 
The first inequality is a trivial result of norm inequalities while the second one assumes a typical Lipschitz continuity for optimization functions~\cite{nesterov2018lectures}.
\begin{assumption} 
\label{assmp:min_mat_val}
$\exists \lambda > 0$, 
\[
\lambda \le \inf~\{\min \sigma_{\theta}^2(\x, \y): \x \in \mathcal{X}, \y \in \mathcal{Y}, \theta \in \mathcal{S}_{\theta} \} \,.
\]
\end{assumption}
It is also a practical assumption as we should not allow zero variance for any dimension in a multivariate Student distributed dimension.  
Otherwise, the learning would not succeed. 
 
\begin{theorem}
\label{thm:z_bound}
Let $m$ denote the dimension of the latent variable $\z$ and $\zhat$.
Also, let $\Psi(\cdot)$ be the digamma function.
Given an observation $\x$ and its true label value $\y_0$, the corresponding true latent variable is $\z$. 
Imagine a vector of noisy label value $\y_1$ and $\Delta(\y_0, \y_1) < \infty$. 
When $\nu = \nu_0 > 2$, the following inequality holds.
\begin{multline}
\kl[ q(\z \mid \x, \yhat = \y_1) \mid\mid q(\z \mid \x, \yhat = \y_0)] \\
% &\le \kl[ q(\zhat \mid \x, \yhat = \y_0) \mid\mid q(\zhat \mid \x, \yhat = \y_1)] \notag \\
\le C_1 + C_2 \Delta(\y_0, \y_1) \label{eq:gap_ineq} \,  
\end{multline}
where, letting $\alpha = \frac{\sqrt{\nu \lambda}}{L}$,  
\begin{align*} 
  C_1 &=  \frac{\nu+m}{2} \left\{ \frac{M \sqrt{m} \alpha }{2(\nu-2) L} - \Psi\left( \frac{\nu + m}{2} \right) + \Psi\left(\frac{\nu}{2} \right) \right\} \\
  C_2 &= \frac{m M}{2e} + \frac{(\nu+m)\sqrt{m}}{2\alpha} \,.
\end{align*}
\end{theorem}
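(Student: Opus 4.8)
The plan is to reduce the stated divergence to a divergence between two multivariate Student-$t$ laws of the same degrees of freedom, and then bound that quantity term by term. Writing the marginalized proposal as $q(\z \mid \x, \yhat) = \int q(\z \mid \zhat)\, q(\zhat \mid \x, \yhat)\, d\zhat$, I first observe that the inner kernel $q(\z \mid \zhat)$ in \cref{eq:unsup_zhat0} is shared by the two conditionings $\yhat=\y_0$ and $\yhat=\y_1$. By the chain rule for the KL-divergence together with the non-negativity of the conditional KL, the target is controlled by the divergence carried entirely by the Student factor $q(\zhat \mid \x, \yhat)$ of \cref{eq:unsup_zhat}; equivalently, since the heavy-tailed Student factor governs the marginal, I treat $q(\z \mid \x, \yhat)$ as a multivariate Student-$t$ with degrees of freedom $\nu$, location $\mu_{\kappa}(\x,\yhat)$ and diagonal scale inherited from $\sigma_{\theta}^2(\x,\yhat)$. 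Either way the problem becomes $\kl[\,t(\mu_1,\Sigma_1,\nu) \mid\mid t(\mu_0,\Sigma_0,\nu)\,]$ in dimension $m$, where the subscript $i$ records evaluation at $\y_i$, and $\mu_i$, $\Sigma_i$ are exactly the objects constrained by \cref{assmp:sigma_L_lip,assmp:mu_L_lip,assmp:min_mat_val}.

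Next I expand this Student-$t$ divergence into three pieces. Because both laws share $\nu$ and $m$, the $\Gamma$-normalizers cancel and only a log-determinant ratio $\tfrac12\log(|\Sigma_0|/|\Sigma_1|)=\tfrac12\sum_j \log\big(\sigma_{\theta}^2(\x,\y_0)_j/\sigma_{\theta}^2(\x,\y_1)_j\big)$ survives from the normalization. The remaining two pieces are a ``self'' term $-\tfrac{\nu+m}{2}\,\E_{1}[\log(1+\tfrac1\nu Q_1)]$, with $Q_1$ the Mahalanobis form of the first law about its own center, and a ``cross'' term $+\tfrac{\nu+m}{2}\,\E_{1}[\log(1+\tfrac1\nu Q_0)]$, with $Q_0$ the Mahalanobis form about the other center. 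The self term is exact: the standard moment identity for the radial part of a Student-$t$ gives $\E_{1}[\log(1+\tfrac1\nu Q_1)] = \Psi(\tfrac{\nu+m}{2}) - \Psi(\tfrac{\nu}{2})$, which supplies precisely the two digamma terms appearing in $C_1$ and requires only $\nu>0$.

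The real work is the cross term, for which no closed form exists. I would decompose $\zhat-\mu_0=(\zhat-\mu_1)+(\mu_1-\mu_0)$ so that, under the first law, the cross product vanishes in expectation and $\E_1[Q_0]=\tfrac{\nu}{\nu-2}\operatorname{tr}(\Sigma_0^{-1}\Sigma_1)+(\mu_1-\mu_0)^\top\Sigma_0^{-1}(\mu_1-\mu_0)$; the covariance factor $\tfrac{\nu}{\nu-2}$ is where the hypothesis $\nu=\nu_0>2$ enters. I then control the trace by the variance-ratio bound $\operatorname{tr}(\Sigma_0^{-1}\Sigma_1)\le mM\Delta(\y_0,\y_1)$ (\cref{assmp:sigma_L_lip}, applied in both label orders using the symmetry of $\Delta$), and the quadratic center gap by the floor $\lambda$ (\cref{assmp:min_mat_val}) and the Lipschitz bound $\|\mu_1-\mu_0\|\le L\Delta$ (\cref{assmp:mu_L_lip}), yielding a Mahalanobis gap at most $L^2\Delta^2/\lambda$. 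Applying Jensen's inequality (concavity of $\log$) to pass the expectation inside $\log(1+\cdot)$, followed by the elementary inequalities $\log(1+u)\le\sqrt{u}$ and $\sqrt{a+b}\le\sqrt a+\sqrt b$, turns the squared center gap into a term linear in $\Delta$ scaled by $1/\alpha$ with $\alpha=\sqrt{\nu\lambda}/L$, while an AM--GM split of the residual $\sqrt{\Delta}$ factor distributes mass between the constant part and the $\Delta$-proportional part; the log-determinant ratio is likewise bounded through $\log t\le t/e$ to produce the $\tfrac{mM}{2e}$ contribution to $C_2$. Collecting the $\Delta$-free pieces into $C_1$ and the $\Delta$-proportional pieces into $C_2$, and substituting $\alpha$, gives \cref{eq:gap_ineq}.

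I expect the cross term to be the principal obstacle. The difficulty is twofold: the expectation of a logarithm of a quadratic form under a mismatched Student-$t$ has no exact value, so the chain of Jensen/square-root/AM--GM inequalities must be arranged so that the mean shift, which is intrinsically quadratic in $\Delta$, is re-expressed as an affine function of $\Delta$ (using that $\Delta$ is either $0$ or at least $1$) in order to match the claimed affine bound $C_1+C_2\Delta$. The secondary care points are ensuring every invocation of the finite covariance keeps $\nu>2$, and verifying that the reduction from the $\z$-marginal to the governing Student-$t$ is valid, since it is the step that legitimizes stating the assumptions directly on the encoder's location and scale maps.
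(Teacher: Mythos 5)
Your argument is correct and follows essentially the same route as the paper's proof. Your first reduction---the KL chain rule with the shared kernel $q(\z \mid \zhat)$ plus monotonicity under marginalization---is exactly the data-processing-inequality step the paper uses to pass from $\kl[q(\z \mid \x, \yhat=\y_1) \mid\mid q(\z \mid \x, \yhat=\y_0)]$ to the divergence between the two Student factors $q(\zhat \mid \x, \yhat=\y_i)$ (your parenthetical ``equivalently, treat $q(\z\mid\x,\yhat)$ as a Student-$t$'' is not literally true, since a Normal kernel mixed over a Student-$t$ is not Student-$t$, but it is also unnecessary because the DPI route already suffices). The one substantive difference is that the paper \emph{cites} an upper bound on the KL between two multivariate Student-$t$ laws with common $\nu$ (its appendix Theorem 3 and Corollary 1, from Huang et al.), whereas you re-derive it from scratch---exact self term via the digamma/radial identity, Jensen on the cross term, and the $\tfrac{\nu}{\nu-2}$ covariance factor---which is a valid, self-contained proof of that lemma; the subsequent term-by-term bounds (trace ratio $\le mM\Delta$, Mahalanobis gap $\le mL^2\Delta^2/\lambda$, $\log t \le t/e$ for the determinant, $\log(1+u)\le\sqrt{u}$) coincide with the paper's Propositions 1--2 and Corollary 2. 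At the last step the paper linearizes $\sqrt{a\Delta + b\Delta^2}$ by completing the square, getting the coefficient $1/\alpha$ on $\Delta$ exactly, while your subadditivity-plus-AM--GM split yields the same affine form with a marginally larger constant; your concern about needing $\Delta$ to be $0$ or at least $1$ is unfounded, since $\sqrt{b\Delta^2}=\sqrt{b}\,\Delta$ is already linear and only the $\sqrt{a\Delta}$ piece requires the AM--GM (or completed-square) treatment, which is valid for all $\Delta \ge 0$.
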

It immediately follows that the KL-divergence is bounded by $O(m \sqrt{m} \Delta(\y_0, \y_1))$, while if the proposals are replaced by multivariate Normal distributions, the divergence is then bounded by $O(m \Delta(\y_0, \y_1)^2)$\footnote{The theoretical analysis is placed in Section C of the supplemental material.}.
This provides a perspective that the Student distribution could be a more noise-resistant distribution when noise is large.
To emphsize, these are the worst case theoretical bounds but the actual differences could vary depending on the specific cases.
Practically, Normal distributions could perform better than the Student distributions in some cases, as discussed in~\cref{sub:as} of our experiments. 
% However, the coefficients associated with the metric distance are also important in specifying the distance bounds.
Nevertheless, we argue that an upper bound with a close-to-zero coefficient on the distance metric is not necessarily better. 
Such a bound might not assist the model in terms of empirical performance, since it is even more detrimental to allow any noisy labels to learn the true latent variables.

% \todo[inline]{I'm still not getting the above point in blue fully. If  $\Delta(\y_0, \y_1)>1$, then we expect the Student distribution to be more noise-resistant, since in this case $\Delta(\y_0, \y_1)^2> \Delta(\y_0, \y_1)$. If $\Delta(\y_0,\y_1)<1$, then we have that $\Delta(\y_0, \y_1)> \Delta(\y_0, \y_1)^2$ and Normal is best. However, $\Delta(.,.)$ is a metric on $\mathcal{Y}$, which is a binary $k$-dimensional space. For usual metrics, this will be $\ge1$ or $0$. I do not understand ``the metric $\Delta(\cdot)$ varies depending on different deep neural network architectures and is challenging to standardize''. Also, looking at $C_1$ and $C_2$, they are $O(m\sqrt{m})$, no? }

% However, notably, introducing a bound of close-to-zero coefficient on the distance may not necessarily be better, as it may not help the model perform empirically.

\section{Experiments}
\label{sec:experiments}
% In this section, we first introduce experimental settings such as datasets, baselines for comparison, and evaluation metrics in~\cref{sub:exp_design}\footnote{To maintain the flow of the main text, the implementation details are presented in~\cref{sec:impl} of the Appendix.}. 
% Next, we present the empirical analysis of the direct extension of NPC in~\cref{sec:ext_exp}.
% Then, the results for both the unsupervised and semi-supervised \method{} are presented in~\cref{sec:unsup_results,sec:semisup_results}.
% Finally, we carry out the sensitivity analysis and ablation study in~\cref{sub:sens,sub:as} respectively.

In this section, we first introduce experimental settings such as datasets, baselines for comparison, and evaluation metrics\footnote{To maintain the flow of the main text, the implementation details are presented in Section D of the supplemental material.}. 
Next, we present the empirical analysis of the direct extension of NPC
Then, we demonstrate the results for both the unsupervised and semi-supervised \method{}.
Finally, we conduct the sensitivity analysis and ablation study respectively.

\subsection{Experimental Design}
\label{sub:exp_design}
\paragraph{Datasets} To evaluate the performance of our proposed method---\method{}, we used four benchmark datasets: \vocseven{}~\cite{everingham2007pascal}, \voctwelve{}~\cite{everingham2007pascal}, \coco{}~\cite{lin2014microsoft}, and \tomato{}~\cite{gehlot2023tomato}. 
\cref{tb:dataset} summarizes the statistics of each dataset. 
\vocseven{} and \voctwelve{} contain images from 20 common object labels, and \coco{} contains images from 80 common object labels. For both the \vocseven{} and \voctwelve{} datasets, we used the \vocseven{} test set which contains 4,952 images as in~\cite{xia2023holistic}. 
The \tomato{} dataset contains images from 8 tomato diseases.  
 
We generated synthetic noisy labels following designs from previous studies~\cite{xia2023holistic, patrini2017making, han2018co} with a class-dependent noise transition matrix $T$, where $T_{ij} = p(\tilde{y}=j \mid y=i)$ refers to the probability of the $i$-th label to be flipped into the $j$-th label. 
More specifically, we adopt two noise settings: symmetric (\sym)~\cite{van2015learning} and pairflip (\pair)~\cite{han2018co}, where \sym{} noise flips labels uniformly to other labels and \pair{} noisifies the specific labels with the similar ones. 
Following~\cite{xia2023holistic}, for each noise setting, we take into account various levels of noise (specifically, $0\%, 30\%, 40\%, 50\%$), which is referred to as the noise rate (NR), to verify the effectiveness of \method{} calibration under different noise conditions. 
In addition, we include the setting for clean data to analyze its generalization ability.
The clean subset of data consists of a randomly selected half of the validation set.

%%% Dataset Table
\begin{table}[t]
	\centering
	\caption{Dataset statistics including the number of examples in training, validation, clean and test set as well as the number of labels.}
	\begin{tabular}{c c c c c c}
		\toprule
		\textbf{Dataset} & \textbf{Training} & \textbf{Validation} & \textbf{Clean} & \textbf{Test} & \textbf{Labels} \\ \midrule
		\vocseven & 4,509		& 251	& 251 & 4,952 & 20 \\ 
		\voctwelve & 10,386	& 577 & 577 & 4,952  & 20 \\ 
		\coco	& 82,081	& 10,034 & 10,034 & 20,069	& 80 \\ 
		\tomato	& 3,945		& 562	& 562 & 584	& 8	 \\ \bottomrule
	\end{tabular}
	\label{tb:dataset}
	% \todo[inline]{Need to fix the numbers for the val set, and add the stats for the clean set.}
\end{table}

\paragraph{Baselines} To demonstrate the performance improvements with \method{} as a post-processor, we apply it to a pre-trained classifier with baseline methods. These  methods include: 
\begin{itemize}
\item 
MLP is a baseline multilabel classifier (enabled by a Sigmoid layer appended to the last position) with a fully connected layer on top of the extracted features from an image encoder. We selected two typical image encoders: 
\begin{itemize}
	\item ResNet-50~\cite{he2016deep} where the entire model is denoted by $\mbox{MLP}_{r}$; 
	\item LeViT~\cite{alexey2020image} where the entire model is denoted by $\mbox{MLP}_{v}$.
\end{itemize}
\item 
ADDGCN~\cite{ye2020attention} uses a semantic attention module to estimate the content-aware
class-label representations for each class from  extracted feature map where these representations are fed into a graph convolutional network (GCN) for final classification.
\item 
HLC~\cite{xia2023holistic} is a noisy multilabel correction approach built on top of ADDGCN. It uses the ratio between the holistic scores of the example with noisy multilabels and its variant with predicted labels to correct noisy labels during training. A holistic score measures the instance-label and label dependencies in an example.
\end{itemize}
To evaluate the robustness of \method{} to the choice of backbone image encoder, we run experiments with ResNet-50~\cite{he2016deep} (a ResNet based encoder) and LeViT~\cite{graham2021levit} (an attention based encoder) as the image encoder of each baseline method.
However, ADDGCN and HLC are tightly integrated with the ResNet-50 model, and therefore the ResNet-50 retained as the image encoder.
For a fairer comparison, the image encoder in \method{} is fixed to be consistent with those in the base models, being either ResNet-50 or LeViT.
In all settings, \method{} applies the same image encoder as the base model to avoid the situation where a better image encoder brings about the improvements rather than the framework itself.

\paragraph{Evaluation Metrics} 
We utilized \microf{} which measures F1 score across all labels, and \macrof{} which measures the average inter-label F1-measure as our evaluation metrics for evaluating multi-label classification performance.
These two metrics are the most practically important metrics in multilabel classification~\cite{zhu2024hill,wang2022contrastive,zhang2022use,zhang2024multi}. 

\begin{figure}[!t]
\centering
\subfloat[]{
\includegraphics[width=0.42\textwidth]{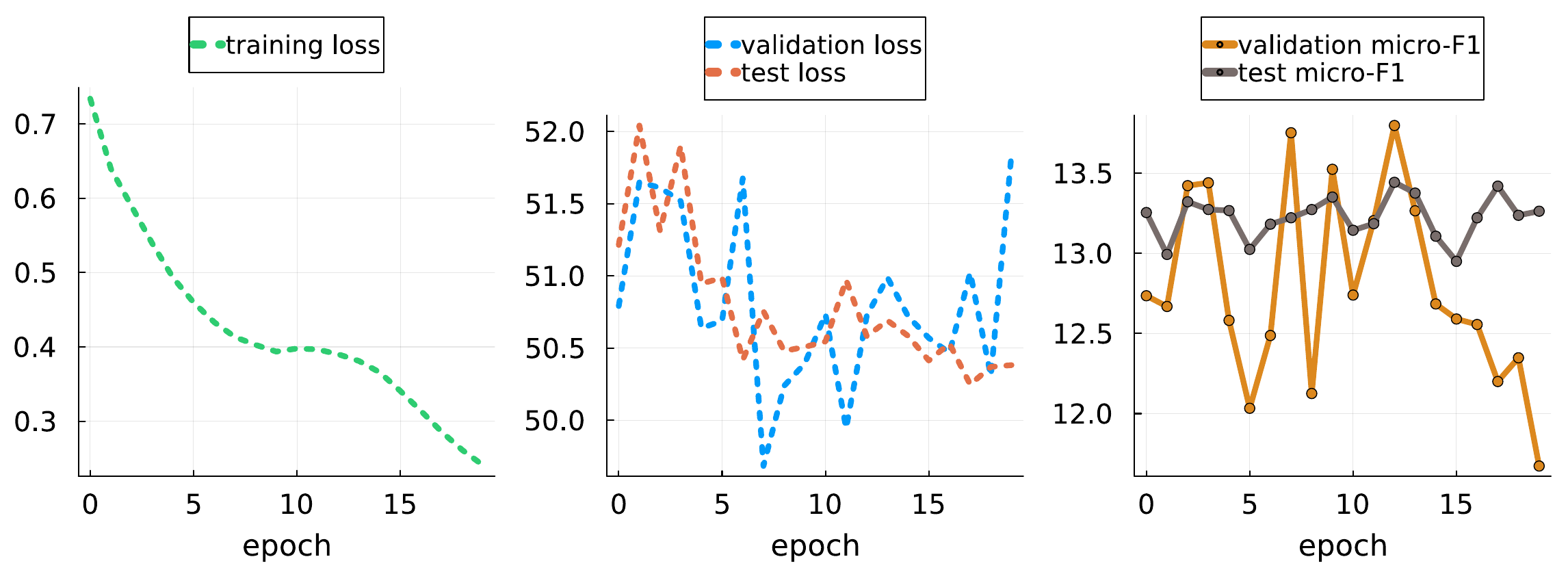}
} \\
\subfloat[]{
\includegraphics[width=0.42\textwidth]{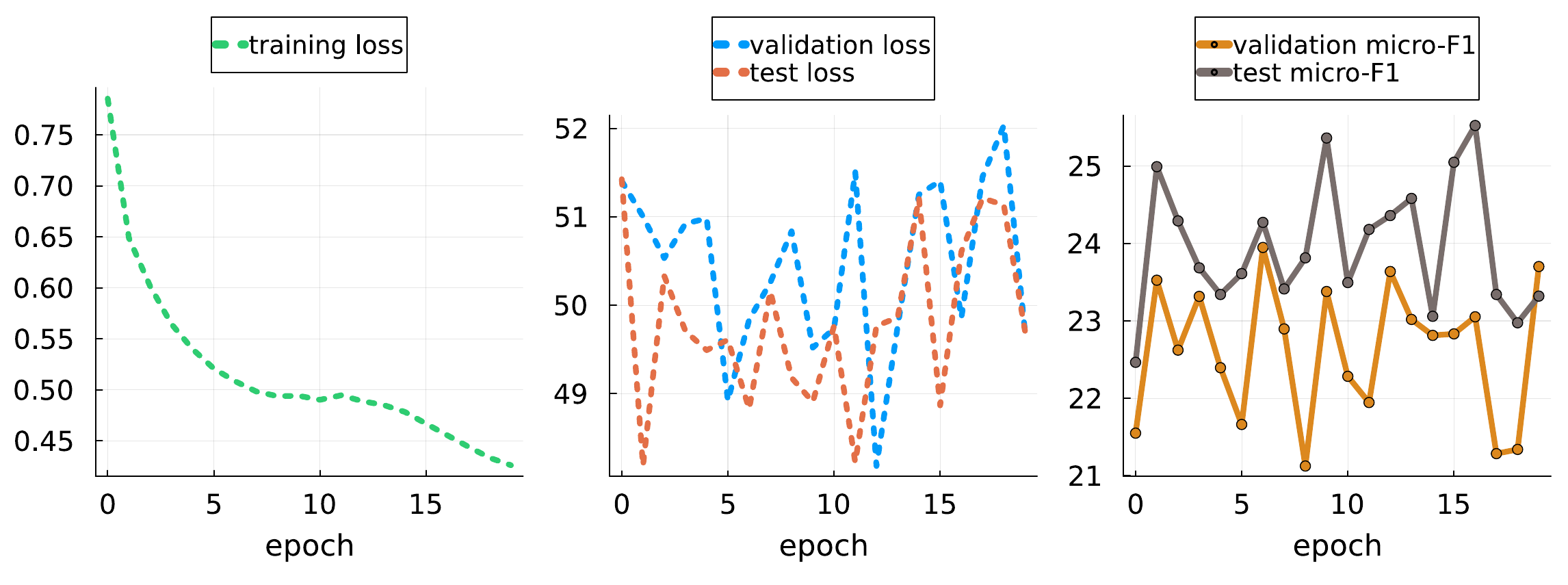}
}
\caption{
Training analysis of extension of NPC.  
(a) Extension of NPC on \vocseven. The base model is chosen to be $\mbox{MLP}_r$. The noise setting is \pair{} with 0.3 NR.
(b) Extension of NPC on \tomato. The base model is chosen to be ADDGCN. The noise setting is \sym{} with 0.5 NR.
} \label{fg:ext_npc}
\end{figure}

\begin{table*}[!t]
  \centering  
  \caption{Unsupervised learning results for \vocseven and \tomato. The standard deviations are reported.} \label{tb:unsup}
  \begin{adjustbox}{width=1\textwidth} 
  \begin{tabular}{cclcccccccc} 
  \toprule
  & & & \multicolumn{4}{c}{\vocseven} & \multicolumn{4}{c}{\tomato} \\
  \cmidrule(l){4-7} \cmidrule(l){8-11} 
  & & & \multicolumn{2}{c}{\sym} & \multicolumn{2}{c}{\pair} & \multicolumn{2}{c}{\sym} & \multicolumn{2}{c}{\pair} \\
  \cmidrule(l){4-5} \cmidrule(l){6-7} \cmidrule(l){8-9} \cmidrule(l){10-11}  
   & NR & post-processor & macro-F1 & micro-F1 & macro-F1 & micro-F1 & macro-F1 & micro-F1 & macro-F1 & micro-F1 \\ 
  \midrule \midrule
  \multicolumn{1}{c}{} & \multirow{3}{*}{0\%} & Baseline & $\mathbf{84.0}$ & $\mathbf{85.7}$ & & & $20.4$ & $47.2$ & & \\ 
  \multicolumn{1}{c}{\multirow{12}{*}{$\mbox{ADDGCN}$}} & & KNN & $81.1$ & $81.8$ & & & $23.6$ & $45.7$ & & \\
  \rowcolor{mayablue!30}
  \cellcolor{white}  & \cellcolor{white}  & \method{} & $83.8 \pm 0.3$ & $85.1 \pm 0.2$ & & & $\mathbf{23.8 \pm 1.7}$ & $\mathbf{49.1 \pm 0.7}$ & & \\ 
  \cmidrule(l){2-11} 
  \multicolumn{1}{c}{} & \multirow{3}{*}{30\%} & Baseline & $74.1$ & $74.5$ & $69.6$ & $71.4$ & $17.9$ & $37.4$ & $15.8$ & $34.7$ \\
  \multicolumn{1}{c}{} & & KNN & $64.9$ & $65.9$ & $50.2$ & $54.8$ & $14.2$ & $31.7$ & $7.9$ & $17.2$ \\
  \rowcolor{mayablue!30}	
  \cellcolor{white}  & \cellcolor{white} & \method{} & $\mathbf{79.1 \pm 0.3}$ & $\mathbf{80.4 \pm 0.2}$ & $\mathbf{71.6 \pm 1.3}$ & $\mathbf{72.6 \pm 0.6}$ & $\mathbf{24.7 \pm 2.0}$ & $\mathbf{46.4 \pm 1.4}$ & $\mathbf{26.5 \pm 2.3}$ & $\mathbf{45.3 \pm 1.0}$ \\
  \cmidrule(l){2-11} 
  \multicolumn{1}{c}{} & \multirow{3}{*}{40\%}    & Baseline & $68.1$ & $68.6$ & $55.1$ & $62.1$ & $14.8$ & $30.4$ & $20.1$ & $36.1$ \\
  \multicolumn{1}{c}{} & & KNN & $57.7$ & $60.8$ & $9.5$ & $13.1$ & $2.8$ & $7.1$ & $12.9$ & $26.4$ \\
  \rowcolor{mayablue!30}
  \cellcolor{white}  & \cellcolor{white} & \method{} & $\mathbf{77.2 \pm 0.4}$ & $\mathbf{78.3 \pm 0.6}$ & $\mathbf{57.3 \pm 2.4}$ & $\mathbf{63.0 \pm 0.7}$ & $\mathbf{24.5 \pm 2.8}$ & $\mathbf{42.5 \pm 0.5}$ & $\mathbf{26.5 \pm 2.0}$ & $\mathbf{43.9 \pm 0.7}$ \\
  \cmidrule(l){2-11} 
  \multicolumn{1}{c}{} & \multirow{3}{*}{50\%} & Baseline & $50.7$ & $59.0$ & $41.4$ & $46.2$ & $12.7$ & $31.0$ & $16.0$ & $27.5$ \\
  \multicolumn{1}{c}{} & & KNN & $6.7$ & $14.3$ & $39.1$ & $42.8$ & $3.2$ & $8.8$ & $6.2$ & $17.6$ \\
  \rowcolor{mayablue!30}
  \cellcolor{white}  & \cellcolor{white} & \method{} & $\mathbf{70.7 \pm 0.6}$ & $\mathbf{76.8 \pm 0.3}$ & $\mathbf{48.8 \pm 0.9}$ & $\mathbf{52.1 \pm 0.8}$ & $\mathbf{20.6 \pm 1.2}$ & $\mathbf{42.2 \pm 0.3}$ & $\mathbf{22.7 \pm 1.5}$ & $\mathbf{35.0 \pm 0.4}$ \\ 
  \midrule \midrule 
   & \multirow{3}{*}{0\%} & Baseline & $\mathbf{84.0} $ & $\mathbf{85.8}$ & & & $10.1$ & $32.2$ & & \\
   \multirow{12}{*}{$\mbox{HLC}$} & & KNN & $81.0$ & $81.9$ & & & $\mathbf{15.4}$ & $\mathbf{44.4}$ & & \\
  \rowcolor{mayablue!30}
  \cellcolor{white}  & \cellcolor{white} & \method{} & $83.7 \pm 0.2$ & $85.2 \pm 0.1$ & & & $12.4 \pm 1.0$ & $36.5 \pm 0.7$ & & \\ 
  \cmidrule(l){2-11} 
  & \multirow{3}{*}{30\%} & Baseline & $78.7$ & $80.2$ & $77.6$ & $80.0$ & $6.5$ & $25.6$ & $7.0$ & $25.9$ \\
  & & KNN & $66.4$ & $68.6$ & $69.0$ & $70.7$ & $5.3$ & $24.4$ & $5.3$ & $24.4$ \\
  \rowcolor{mayablue!30}
  \cellcolor{white}  & \cellcolor{white} & \method{} & $\mathbf{80.4 \pm 0.3}$ & $\mathbf{81.6 \pm 0.2}$ & $\mathbf{79.9 \pm 0.4}$ & $\mathbf{81.5 \pm 0.4}$ & $\mathbf{14.4 \pm 3.5}$ & $\mathbf{29.6 \pm 2.1}$ & $\mathbf{12.8 \pm 2.3}$ & $\mathbf{28.2 \pm 1.2}$ \\
  \cmidrule(l){2-11} 
  & \multirow{3}{*}{40\%} & Baseline & $76.4$ & $77.7$ & $64.4$ & $68.9$ & $5.4$ & $24.4$ & $6.5$ & $25.4$ \\
  & & KNN & $69.2$ & $70.0$ & $51.8$ & $56.5$ & $5.3$ & $24.4$ & $5.3$ & $24.4$ \\
  \rowcolor{mayablue!30}
  \cellcolor{white}  & \cellcolor{white} & \method{} & $\mathbf{79.6 \pm 0.5}$ & $\mathbf{81.2 \pm 0.5}$ & $\mathbf{70.0 \pm 0.9}$ & $\mathbf{71.7 \pm 0.4}$ & $\mathbf{10.4 \pm 3.2}$ & $\mathbf{27.9 \pm 2.5}$ & $\mathbf{13.4 \pm 2.2}$ & $\mathbf{29.1 \pm 1.3}$ \\
  \cmidrule(l){2-11} 
  & \multirow{3}{*}{50\%} & Baseline & $67.9$ & $73.7$ & $36.0$ & $45.1$ & $7.0$ & $34.3$ & $5.4$ & $24.5$ \\
  & & KNN & $53.5$ & $61.1$ & $34.4$ & $44.0$ & $6.8$ & $33.8$ & $5.3$ & $24.4$ \\
  \rowcolor{mayablue!30}
  \cellcolor{white}  & \cellcolor{white} & \method{} & $\mathbf{73.3 \pm 0.4}$ & $\mathbf{78.7 \pm 0.4}$ & $\mathbf{42.3 \pm 0.6}$ & $\mathbf{49.5 \pm 0.4}$ & $\mathbf{9.5 \pm 1.0}$ & $\mathbf{35.4 \pm 1.0}$ & $\mathbf{13.5 \pm 6.2}$ & $\mathbf{26.7 \pm 1.3}$ \\ 
  \midrule \midrule
   & \multirow{3}{*}{0\%} & Baseline & $\mathbf{83.7}$ & $\mathbf{85.6}$ & & & $15.6$ & $39.0$ & & \\
  \multirow{12}{*}{$\mbox{MLP}_r$} & & KNN & $80.5$ & $82.5$ & & & $1.2$ & $3.0$ & & \\
  \rowcolor{mayablue!30}
  \cellcolor{white}  & \cellcolor{white} & \method{} & $83.2 \pm 0.5$ & $84.9 \pm 0.2$ & & & $\mathbf{21.8 \pm 2.5}$ & $\mathbf{43.0 \pm 0.7}$ & & \\
  \cmidrule(l){2-11} 
  & \multirow{3}{*}{30\%} & Baseline & $74.9$ & $76.4$ & $69.7$ & $73.1$ & $11.5$ & $21.3$ & $10.8$ & $21.9$ \\
  & & KNN & $65.5$ & $70.2$ & $47.4$ & $62.4$ & $0.5$ & $0.6$ & $1.7$ & $3.3$ \\
  \rowcolor{mayablue!30}
  \cellcolor{white}  & \cellcolor{white} & \method{} & $\mathbf{78.0 \pm 0.6}$ & $\mathbf{79.7 \pm 0.4}$ & $\mathbf{71.3 \pm 0.9}$ & $\mathbf{74.9 \pm 0.5}$ & $\mathbf{28.2 \pm 0.8}$ & $\mathbf{36.0 \pm 1.3}$ & $\mathbf{27.9 \pm 2.8}$ & $\mathbf{37.8 \pm 2.4}$ \\
  \cmidrule(l){2-11} 
  & \multirow{3}{*}{40\%} & Baseline & $67.3$ & $68.9$ & $57.4$ & $61.4$ & $9.5$ & $17.4$ & $10.9$ & $18.2$ \\ 
  & & KNN & $51.8$ & $56.1$ & $21.4$ & $24.2$ & $0.2$ & $0.3$ & $0.2$ & $0.3$ \\
  \rowcolor{mayablue!30}
  \cellcolor{white}  & \cellcolor{white} & \method{} & $\mathbf{74.2 \pm 0.7}$ & $\mathbf{76.0 \pm 0.7}$ & $\mathbf{61.0 \pm 0.6}$ & $\mathbf{64.3 \pm 0.5}$ & $\mathbf{26.8 \pm 3.2}$ & $\mathbf{34.5 \pm 2.9}$ & $\mathbf{28.1 \pm 1.3}$ & $\mathbf{37.4 \pm 1.0}$ \\
  \cmidrule(l){2-11} 
  & \multirow{3}{*}{50\%} & Baseline & $57.1$ & $59.0$ & $33.5$ & $39.8$ & $13.5$ & $21.3$ & $9.7$ & $14.0$ \\ 
  & & KNN & $29.8$ & $38.9$ & $0.0$ & $0.0$ & $0.0$ & $0.0$ & $0.3$ & $0.6$ \\
  \rowcolor{mayablue!30}
  \cellcolor{white}  & \cellcolor{white} & \method{} & $\mathbf{70.1 \pm 0.9}$ & $\mathbf{72.7 \pm 0.7}$ & $\mathbf{49.9 \pm 0.2}$ & $\mathbf{54.2 \pm 0.1}$ & $\mathbf{26.8 \pm 2.3}$ & $\mathbf{36.8 \pm 0.8}$ & $\mathbf{25.9 \pm 0.8}$ & $\mathbf{30.7 \pm 1.6}$ \\ 
  \midrule \midrule  
   & \multirow{3}{*}{0\%} & Baseline & $36.2$ & ${48.9}$ & & & $18.0$ & $29.4$ & & \\
  \multirow{12}{*}{$\mbox{MLP}_v$} & & KNN & $26.2$ & $\mathbf{50.0}$ & & & $7.9 $ & $\mathbf{32.0}$ & & \\
  \rowcolor{mayablue!30}
  \cellcolor{white}  & \cellcolor{white} & \method{} & $\mathbf{38.6 \pm 1.9}$ & $47.6 \pm 3.0$ & & & $\mathbf{21.6 \pm 1.0}$ & $25.0 \pm 1.2$ & & \\
  \cmidrule(l){2-11} 
  & \multirow{3}{*}{30\%} & Baseline & $16.6$ & $19.4$ & $35.9$ & $47.2$ & $17.7$ & $\mathbf{26.1}$ & $19.8$ & $\mathbf{29.0}$                        \\ 
  & & KNN & $\mathbf{21.9}$ & $\mathbf{29.1}$ & $15.3$ & $34.4$ & $7.4$ & $24.4$ & $10.4$ & $31.5$ \\
  \rowcolor{mayablue!30}
  \cellcolor{white}  & \cellcolor{white} & \method{} & $18.1 \pm 0.4$ & $20.9 \pm 0.6$ & $\mathbf{39.3 \pm 1.8}$ & $\mathbf{51.4 \pm 1.1}$ & $\mathbf{21.4 \pm 1.1}$ & $24.6 \pm 1.8$ & $\mathbf{23.5 \pm 1.1}$ & $27.9 \pm 1.3$ \\
  \cmidrule(l){2-11} 
  & \multirow{3}{*}{40\%} & Baseline & $22.6$ & $45.0$ & $19.5$ & $22.8$ & $16.4$ & $\mathbf{27.1}$ & $20.1$ & $\mathbf{26.9}$ \\
  & & KNN & $8.5$ & $42.7$ & $26.8$ & $35.9$ & $10.3$ & $20.3$ & $15.1$ & $28.9$ \\
  \rowcolor{mayablue!30}
  \cellcolor{white}  & \cellcolor{white} & \method{} & $\mathbf{28.5 \pm 3.0}$ & $\mathbf{48.3 \pm 1.7}$ & $\mathbf{21.4 \pm 0.7}$ & $\mathbf{24.5 \pm 0.8}$ & $\mathbf{22.2 \pm 1.1}$ & $26.2 \pm 1.5$ & $\mathbf{22.7 \pm 1.1}$ & $26.8 \pm 1.2$ \\
  \cmidrule(l){2-11} 
  & \multirow{3}{*}{50\%} & Baseline & $17.6$ & $35.2$ & $20.1$ & $29.3$ & $18.6$ & $23.4$ & $17.4$ & $\mathbf{28.6}$ \\ 
  & & KNN & $8.4$ & $33.1$ & $8.5$ & $31.8$ & $8.6$ & $17.3$ & $6.7$ & $18.7$ \\
  \rowcolor{mayablue!30}
  \cellcolor{white}  & \cellcolor{white} & \method{} & $\mathbf{19.6 \pm 0.8}$ & $\mathbf{40.8 \pm 1.5}$ & $\mathbf{23.9 \pm 0.7}$ & $\mathbf{32.8 \pm 1.3}$ & $\mathbf{21.5 \pm 0.9}$ & $\mathbf{24.5 \pm 1.3}$ & $\mathbf{22.0 \pm 0.9}$ & ${25.7 \pm 1.3}$ \\
  \bottomrule 
  \end{tabular} 
  \end{adjustbox}
\end{table*}

\subsection{Extension of NPC}
\label{sec:ext_exp}
Let us analyze the training patterns for the trivial extension to the NPC discussed in \cref{sec:ext} using two random settings, under the unsupverised learning.
\cref{fg:ext_npc} presents the corresponding results.
More experiments were actually carried out, but they showed the same patterns and thus are not reported. 
For both settings involving different datasets and noise settings, the training losses (graphs in the first column) in both cases exhibit a monotonic decreasing tendency.
As shown in the second column, the validation and test losses are randomly fluctuating throughout the epochs in both cases, even though the training loss appears to be properly learned.
Apart from that, the value scope of the training losses is smaller than $1$, while the validation and testing losses are around $50$, illustrating an outstanding discrepancy.
This is a strong indication that the connection between the training and validation losses is lost. 
It further implies that the learning process fails to drive the search for model parameters towards the desired space.
Moreover, we observe from the third column that the micro-F1 for the validation and test sets share the same pattern as their losses. 
All of these aspects suggest that this extension is incapable of appropriately learning the distribution of the true labels.

\subsection{Unsupervised Learning Results}
\label{sec:unsup_results} 
% As discussed in the previous subsection, the extension of NPC is not a valid approach, and thus we exclude it from the following empirical comparisons.
Since the extension of NPC has been shown to be an invalid approach in the previous subsection, we exclude it from the following empirical comparisons.
Following~\cite{Bae2022from}, we also implemented a K-nearest neighbor (KNN) method for comparison. 
Since KNN is a deterministic method, we ran it only once for each configuration.
The hyperparameter K is defaulted to 5.
\cref{tb:unsup} exhibits the comparison results for the unsupervised learning using \vocseven{} and \tomato{}.
Apart from that, setting NR to $0\%$ means that the data attains the level of cleanliness as prepared by the publisher.
The term \enquote{baseline} refers to the base model itself without applying any post-processor, e.g., KNN or \method{}.
 
First, our major finding unveils that \method{} can improve both the macro- and macro-F1 in most cases. 
Even for HLC, which is a method improving upon the noisy label situations, the corresponding \method{} is able to further improve the performance. 
It shows that for the clean data, \method{} performs slightly worse than the baseline model.
This phenomenon is expected as our method concentrates highly on the noisy situations. 
We also observe that the improvement of \method{} is even higher when the noise rate increases. 
It implies that our method can well address the noisy label situations.
The \tomato{} dataset is a tiny dataset which may contain a great amount of variation.
In the paper~\cite{gehlot2023tomato}, which published the data, it is evident that the variation in experimental results was significant across the models, despite only slight changes in parameter size under the same model.
For this complex data, some of \method{} outcomes show a slight decrease of the micro-F1; however, it could achieve a greater increase in the macro-F1, which we also deem an overall improvement.
All the results matching this pattern were found when the base model is LeViT, which is smaller than ResNet-50. 
For \tomato{}, HLC leads to a drop in performance given its base model of ADDGCN, even though HLC is proven working well for many other cases.
However, our approach is still able to boost the performance on top of ADDGCN. 
 
Second, the KNN method could perform the best in rare cases.
It appears to be a random method that cannot be guaranteed to perform well, so that it obtains poor performance in many cases.
Even by heuristics, the KNN is also not a suitable approach given the complexity of the multilabel classification cases. 
We emphasize that the zeros reported in the table are not mistaken. 

% Finally, we also observe that \pair{} is a more challenging noise setting as most methods achieve a consistently lower performance compared to that under the \sym{} noise with the same noise rate.
 
\begin{figure*}[!t]
  \centering 
  \subfloat{
    \includegraphics[width=0.4\textwidth]{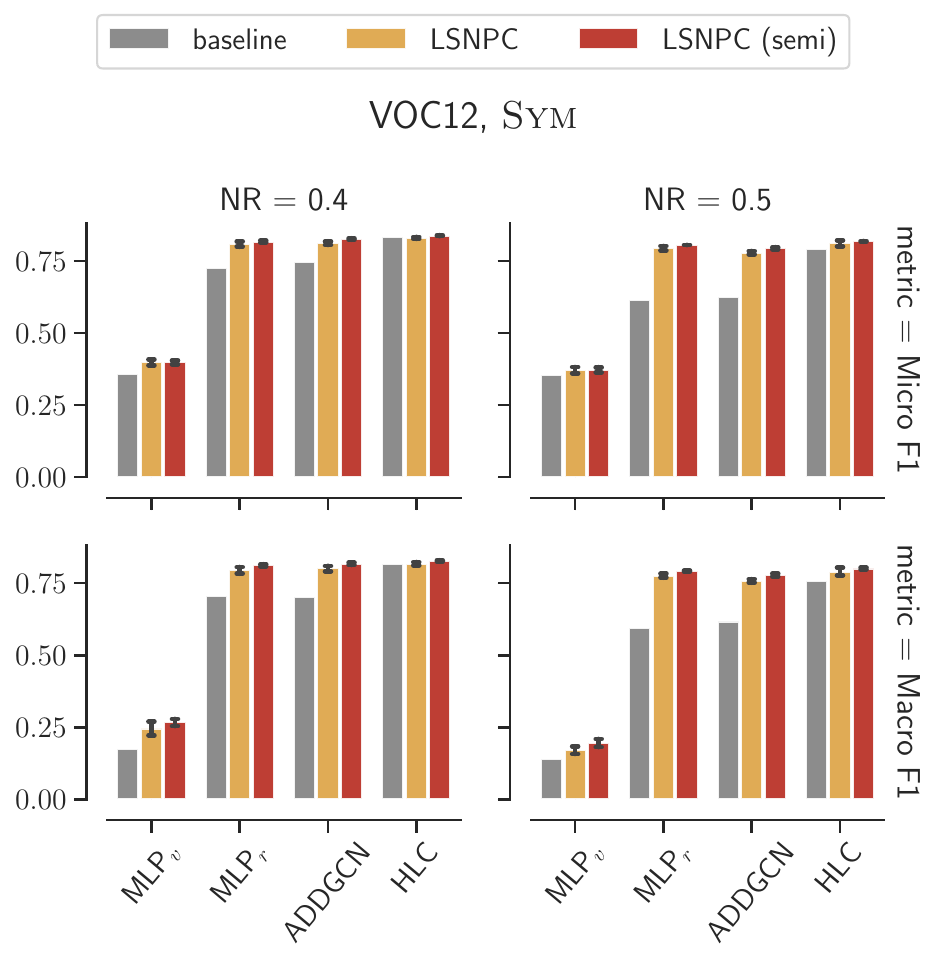}
  }
  \quad \quad
  \subfloat{
    \includegraphics[width=0.4\textwidth]{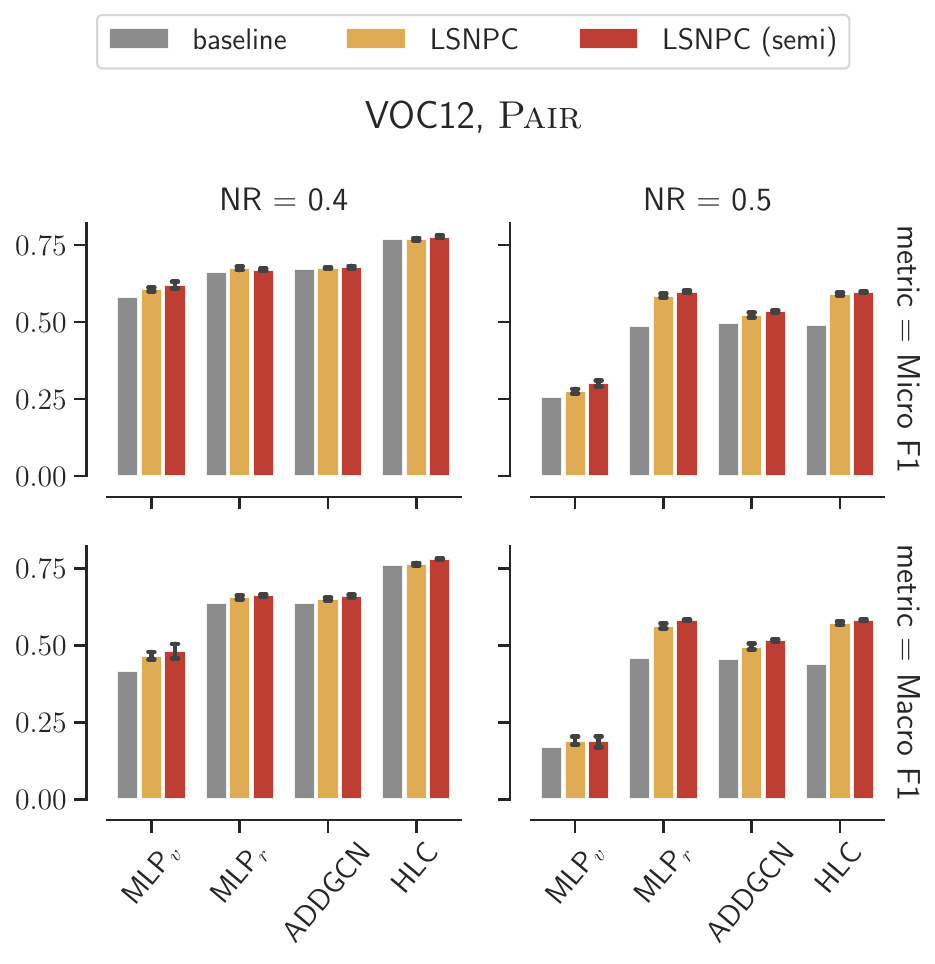}
  }
  \\
  \subfloat{
    \includegraphics[width=0.4\textwidth]{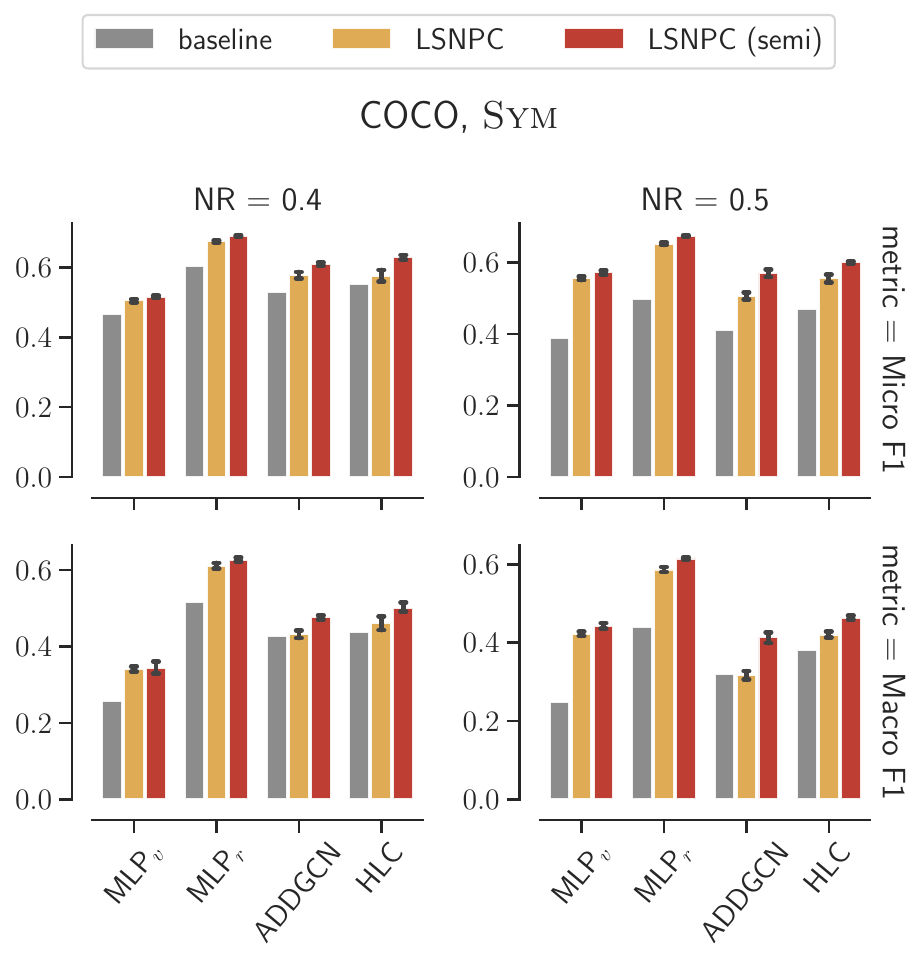}
  }
  \quad \quad
  \subfloat{
    \includegraphics[width=0.4\textwidth]{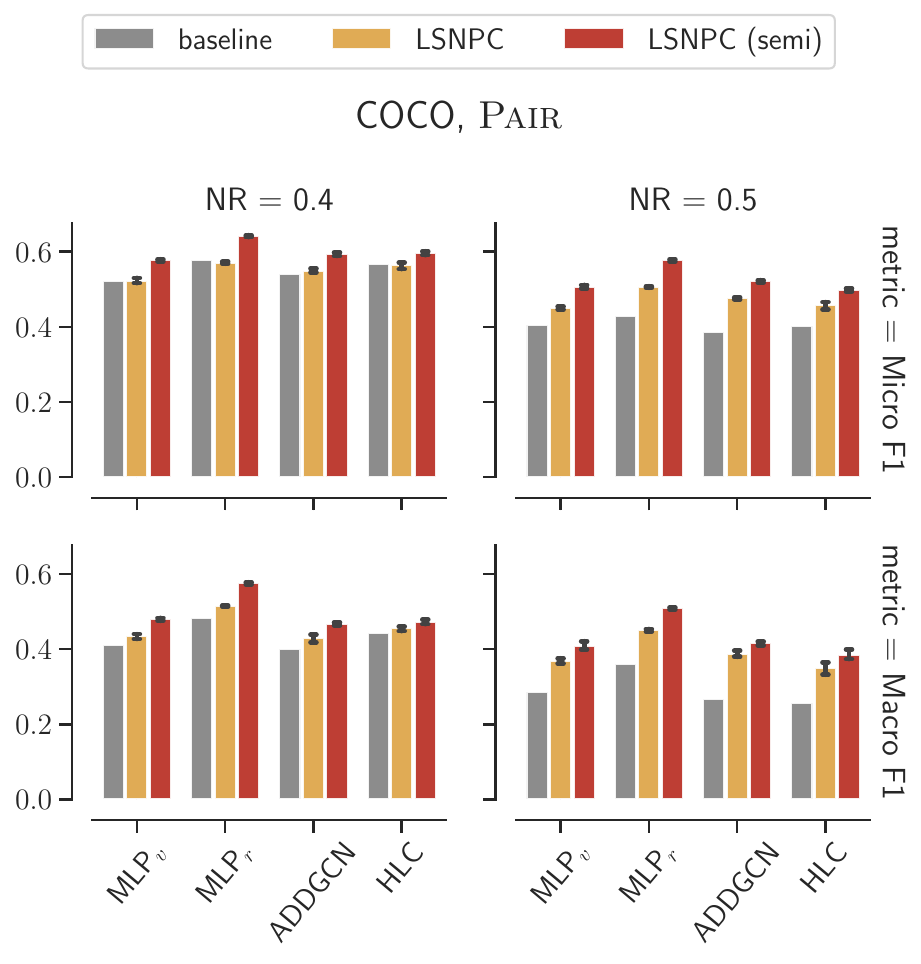}
  }
  \caption{Unsupervised and semi-supervised learning results for \voctwelve{} and \coco. The standard deviations are presented as the error bars.} \label{fg:semi_sup}
\end{figure*}

\subsection{Semi-Supervised Learning Results} 
\label{sec:semisup_results}
% We compare the results of semi-supervised learning and unsupervised learning, through the experiments for the \voctwelve{} and \coco{} datasets.
As the KNN does not work properly, it is no longer included in the experiments here.
\cref{fg:semi_sup} illustrates the outcomes focusing on the NR within $\{30\%, 40\%\}$ for the two noise settings.
The results for \voctwelve{} and \coco{} are respectively demonstrated on the first and second row. 
For each sub-graph, the micro-F1 and macro-F1 are presented in the first and second row respectively.
All semi-supervised learning algorithms were trained for 10 and 5 epochs respectively for the \voctwelve{} and \coco{} dataset, less than that for the unsupervised learning. 
We omitted certain experiments in which clean data was utilized to fine-tune the baseline models that had been trained using noisy labels. 
The reason was that, all the results turned out to be even worse for every single baseline model.

To summarize, the unsupervised \method{} consistently outperform the single model (which again is the baseline).
This is also a strong indication of the effectiveness of our unsupervised \method{}.
Furthermore, the semi-supervised \method{} can outperform the unsupervised fashion in all cases.
In a few cases, e.g., for \voctwelve{} and \sym{}, HLC performs rather well compared with other baseline models; however, \method{} with either unsupervised and semi-supervised paradigm is still able to improve on top of it.
Interestingly, for the \coco{} dataset with both types of noises, our \method{} boosts the performance of ADDGCN, under NR of $50\%$, to a level that is close to the level of performances of any model, under NR of $40\%$, although the base model itself does not perform very well. 
Matching the patterns found in the previous section analyzing the results, \method{} brings about higher improvements in a more noisy situation.

\begin{table*}[t]
\centering
\caption{Ablation Study on Comparing using Student or Normal Distributions for $\zhat$. The best performances are labeled as bold regarding the mean value.} \label{fg:ablation}
\begin{tabular}{cl cc cc cc cc}
\toprule
 & & \multicolumn{2}{c}{ \vocseven, \pair{}, $\mbox{MLP}_v$}      & \multicolumn{2}{c}{\voctwelve, \sym{}, ADDGCN}      & \multicolumn{2}{c}{\tomato, \pair{}, $\mbox{MLP}_r$}   & \multicolumn{2}{c}{\coco, \sym{}, HLC}    \\   
NR &  & macro-F1       & micro-F1       & macro-F1       & micro-F1       & macro-F1        & micro-F1       & macro-F1        & micro-F1 \\ \midrule
\multirow{3}{*}{30\%} & Baseline   & $65.5$         & $71.3$         & $74.1$         & $74.5$         & $10.8$          & $21.9$   &  $54.3$ & $63.9$     \\
 & GAUSS   & $64.0 \pm 3.1$ & $72.2 \pm 2.0$ & $70.6 \pm 2.1$ & $72.6 \pm 1.0$ & $\mathbf{29.0 \pm 2.2}$  & $\mathbf{38.1 \pm 2.3}$ & ${51.8 \pm 1.3}$ & $\mathbf{60.4 \pm 1.0}$\\
\rowcolor{mayablue!30}
  \cellcolor{white} & \method{} & $\mathbf{66.5 \pm 3.0}$ & $\mathbf{73.5 \pm 1.7}$ & $\mathbf{79.1 \pm 0.3}$ & $\mathbf{80.4 \pm 0.1}$ & $27.9 \pm 2.8 $ & $37.8 \pm 2.4$ & $\mathbf{52 \pm 1.6}$ & ${60.1 \pm 1.3}$ \\ \midrule
 
\multirow{3}{*}{40\%} & Baseline   & $39.7$         & $56.5$         & $68.1$         & $68.6$         & $10.9$          & $18.2$        & $43.7$ 	& $55.4$  \\
 & GAUSS   & $44.1 \pm 3.0$ & $56.9 \pm 1.3$ & $57.0 \pm 0.9$ & $62.4 \pm 1.0$ & $27.0 \pm 2.4$  & $37.4 \pm 1.6$ & ${47.8 \pm 1.2}$  & $\mathbf{60.0 \pm 0.8}$ \\
\rowcolor{mayablue!30}
  \cellcolor{white} & \method{} & $\mathbf{44.9 \pm 1.5}$ & $\mathbf{59.5 \pm 0.9}$ & $\mathbf{77.2 \pm 0.4}$ & $7\mathbf{8.3 \pm 0.6}$ & ${28.1 \pm 1.3}$  & ${37.4 \pm 1.0}$ & $\mathbf{48.9 \pm 2.5}$  & ${59.8 \pm 1.8}$ \\ \midrule

\multirow{3}{*}{50\%} & Baseline   & $16.4$         & $27.4$         & $50.7$         & $59.0$         & $9.7$           & $14$      & $38.1$ 	& $47.1$      \\
 & GAUSS   & $\mathbf{16.4 \pm 1.9}$ & $30.6 \pm 2.1$ & $49.4 \pm 0.4$ & $53.0 \pm 0.5$ & $\mathbf{27.5 \pm 1.8}$  & $\mathbf{32.2 \pm 2.1}$ & $42.6 \pm 0.6$  & $56.4 \pm 0.5$ \\
  \rowcolor{mayablue!30}
  \cellcolor{white} & \method{} & $16.1 \pm 2.0$ & $\mathbf{33.0 \pm 1.9}$ & $\mathbf{70.7 \pm 0.6}$ & $\mathbf{76.8 \pm 0.3}$ & $25.9 \pm 0.8$  & $30.7 \pm 1.6$ & $\mathbf{43.0 \pm 1.7}$  & $\mathbf{56.4 \pm 1.7}$ \\ \midrule
\end{tabular} 
\end{table*}

\subsection{Sensitivity Analysis}
\label{sub:sens}

\begin{figure}[!t]
  \centering
  \subfloat[]{
  \includegraphics[width=0.45\textwidth]{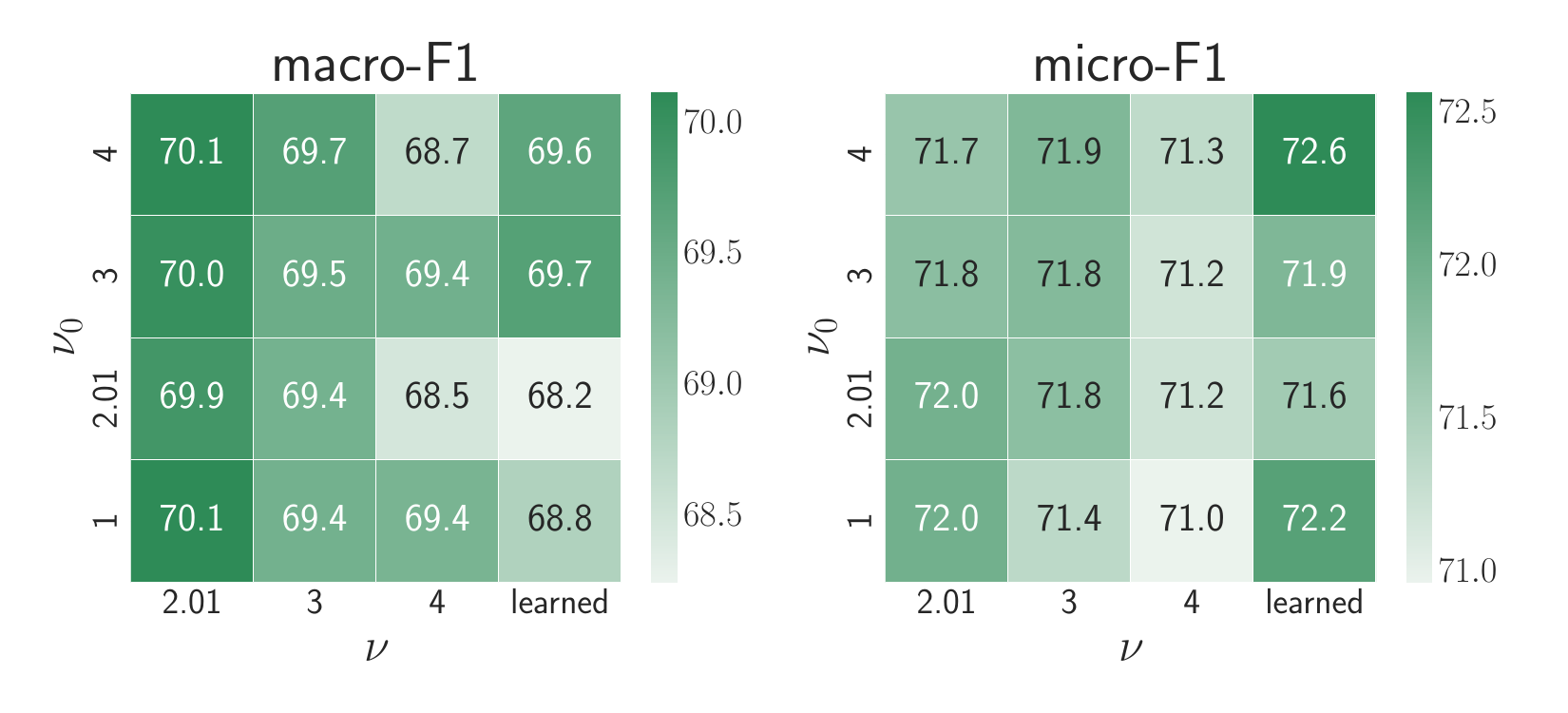}
  } \\
  \subfloat[]{
  \includegraphics[width=0.45\textwidth]{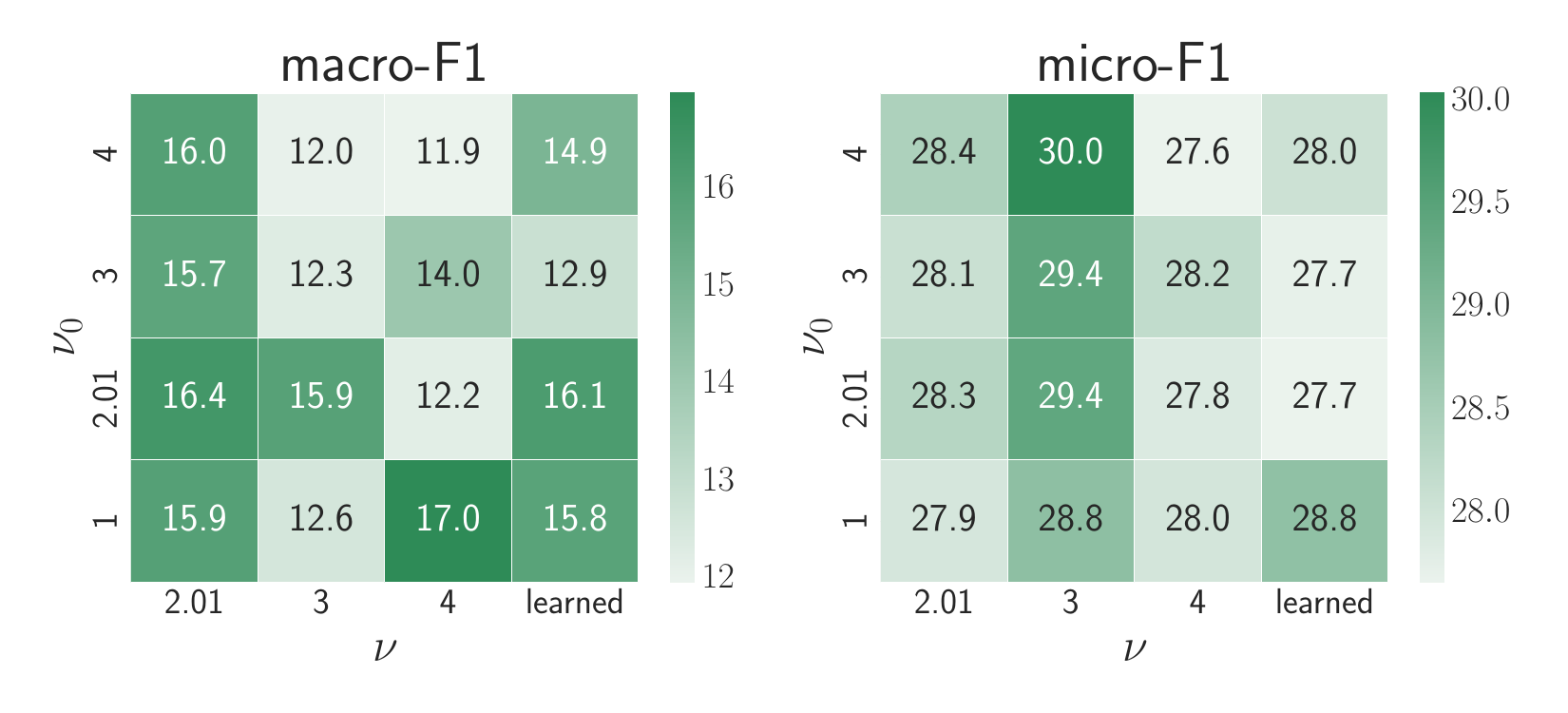}
  }
  \caption{
  	Empirical results of the sensitivity analysis.
  	(a) Sensitivity analysis of choices over $\nu_0$ and $\nu$ on \vocseven. The base model is chosen to be HLC. The noise setting is \pair{} with a rate of 0.4.
  	(b) Sensitivity analysis of choices ove{}r $\nu_0$ and $\nu$ on Tomato. The base model is chosen to be $\mbox{MLP}_v$. The noise setting is \sym{} with a rate of 0.3.
  } \label{fg:sens} 
\end{figure}

\cref{fg:sens} presents the sensitivity analysis results, in which each number is the averaged outcome of five repetitions. 
The value of $2.01$ for $\nu$ aligns to the requirements in our theoretical analysis (\cref{thm:z_bound}) that $\nu$ and $\nu_0$ must be greater than 2.
Hence, we arbitrarily set $\nu$ and $\nu_0$ to $2.01$ when considering a value close to 2.
The experiments for the field of \enquote{learned} were appended after other experiments were finished. 
Under the setting of \enquote{learned}, we therefore define a function $\nu_{\theta}(\cdot)$ which reforms~\cref{eq:sup_zhat} to 
\begin{align*}
q(\zhat \mid \x, \yhat)
&= \mathrm{Student}(\zhat; \mu_{\theta}(\x, \yhat), \mathrm{diag}(\sigma^2_{\theta}(\x, \yhat)), \nu_{\theta}(\x, \yhat)) \,.
\end{align*}
Given a specified network architecture $\mathrm{Net}(\x, \yhat)$, we define $\nu_{\theta}(\x, \yhat)$ as
\begin{align}
\nu_{\theta}(\x, \yhat) = \mathrm{ReLU}(\mathrm{Net}(\x, \yhat)) + 1  \, 
\end{align}
where $\mathrm{ReLU}(\cdot)$ guarantees the non-negativity of $\nu$ and $+1$ is the minimal value of the degree of freedom in the Student distribution. 
 
With regard to the \voctwelve{} dataset, one may observe that the hyperparameters $\nu_0$ and $\nu$ are generally robust with regard to the three performance. 
The under performing settings are merely slightly worse. 
We observe that the performances were decreased for the most cases of $\nu > \nu_0$. 
Although learnable $\nu$ can be the best performer with regard to micro-F1, the fixed hyperparameters with $\nu = 2.01$ locks the best macro-F1 performance provided that its micro-F1 are also sufficiently good.
This might imply that the learnable $\nu$ tend to push the model to focus more on the major labels while a fixed hyperparameter better regularizes \method{} to also attend to the minor labels. 
Apart from that, it shows that for a learnable setting of $\nu$, the choice of $\nu_0$ becomes more sensitive. 

While for \tomato{}, the learnable setting of $\nu$ seems to have a sensitive performance in terms of the macro-F1. 
Also, despite of the best performance of $\nu = 3$ in micro-F1, the macro-F1 heavily underperformed for most settings.
For practitioners who own the capacity of running hyperparameter optimization, one may be able to search for a nice pair of setting that could gain further improvements. 
Otherwise, $(\nu_0, \nu) = (2.01, 2.01)$ is a pair of consistent performing hyperparameters that we could always start our experiments with.

\subsection{Ablation Study}
\label{sub:as}
In the ablation study, we compared the usage of Student and Normal distributions for the latent variable $\zhat$.  
For the Normal distribution setting, we accordingly modify the relevant proposal distributions such that 
\begin{align}
q(\zhat \mid \x, \yhat) = \mathrm{Normal}(\zhat; \mu_{\theta}(\x, \yhat), \mathrm{diag}(\sigma_{\theta}^2(\x, \yhat))) \,.
\end{align} 
This approach is denoted by GAUSS. 
We randomly selected a few configurations for the ablation studies and present the results in~\cref{fg:ablation}. 	
For certain cases, the Student distribution excels the Normal distribution significantly.

Meanwhile, for the cases that Normal performs better, the differences are considerably close, in particular considering that the outcomes have been multiplied by 100. 
It assures that applying the Student distribution is at least as good as the Normal distribution. 
However, the findings suggest that the Normal distribution may outperform in certain scenarios, and practitioners might consider conducting a preliminary experiment to determine the most suited distribution if optimal outcomes are desired.

\subsection{GradCAM Analysis}
To understand what information \method{} focuses on while classifying multilabel images, we use GradCAM method~\cite{selvaraju2017grad}. The first three columns of
\cref{fg:gradcam} shows three GradCAM examples on the \vocseven{} dataset with $\mbox{MLP}_{r}$ (first row) and \method{} (second row), where $\mbox{MLP}_{r}$ is used as the pre-trained classifier. 
The labels at the top of each column indicate ground truth labels for each image, with labels highlighted in blue representing those that were initially missed by the pre-trained classifier but later corrected by \method{}. 
As can be observed from the first column, \method{} concentrates on features relevant to the label \enquote{pottledplant} when this label has not been assigned by the pre-trained classifier.
For the second and third images, \method{} focuses on regions relevant to labels missing from the predictions of $\mbox{MLP}_{r}$, such as the \enquote{dining table} in the third image. 
This behavior can be attributed to the fact that, as \method{} takes both the image features and the predictions of $\mbox{MLP}_{r}$ as inputs, it does not necessarily focus on regions associated with correctly assigned labels but instead directs focus to other regions that might be relevant to missing labels. 
The last two columns show certain misclassified labels using $\mbox{MLP}_{r}$, highlighted in red. 
\method{} is capable of effectively constricting the regions to focus on the ground truth labels, which facilitates the removal of the mis-classified ones. 
\begin{figure}[!t]
\centering
\includegraphics[width=0.48\textwidth]{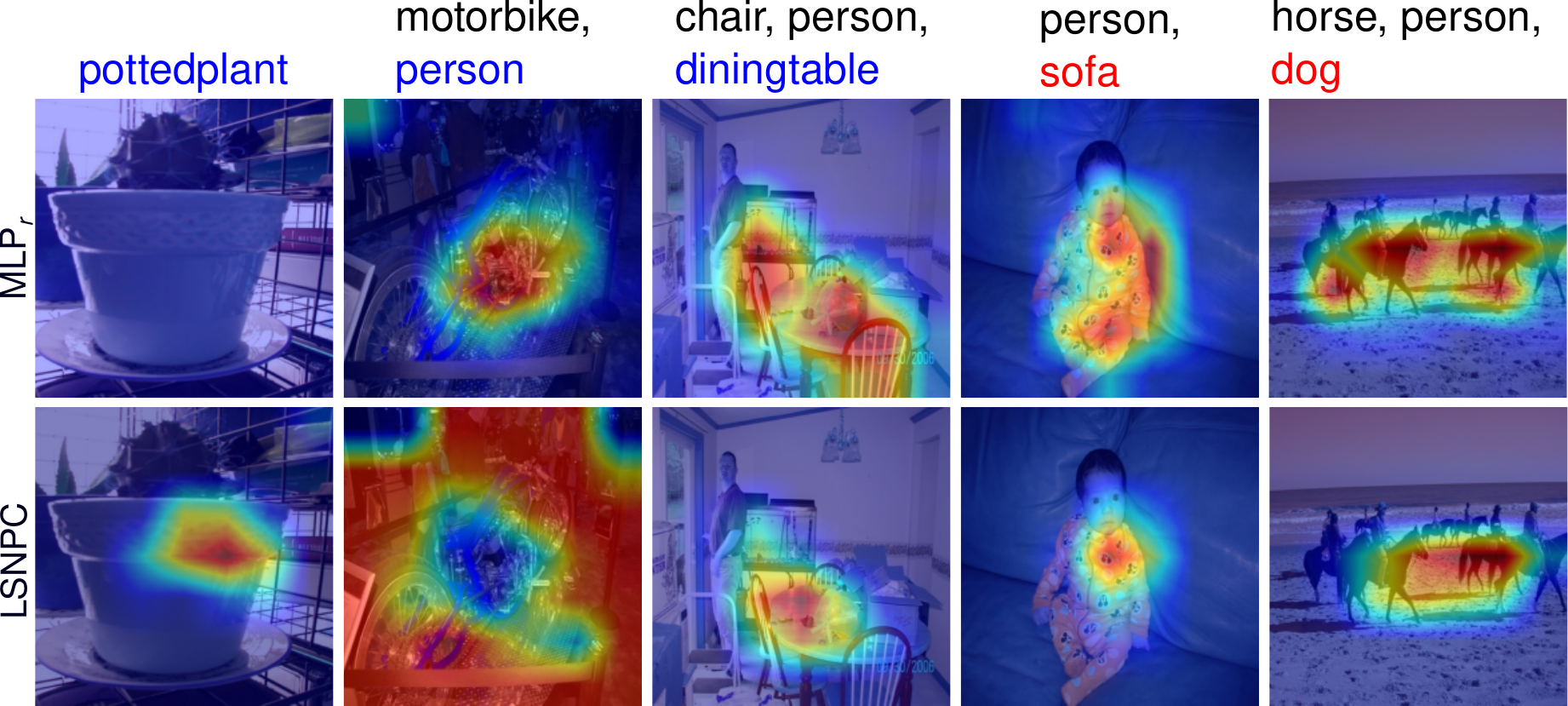}	
\caption{
	GradCAM examples on \vocseven{} with $\mbox{MLP}_{r}$ and \method{} using $\mbox{MLP}_{r}$ as the pre-trained classifier. Ground truth labels are on top of each column, and the labels highlighted in {\color{blue} blue} and {\color{red} red} indicate those missed and mis-classified from the classifications of $\mbox{MLP}_{r}$ respectively, but then corrected by \method{}.
} 
\label{fg:gradcam} 
\end{figure}

\section{Conclusion}
\label{sec:conclusion}
In this paper, we present \method{}, a modeling approach that applies Bayesian deep learning to model the noisy label generation.
Unlike all the exiting approaches, we posit that the label noise is actually generated through a shift of the latent variable of the labels in the space.
It inherently addresses the problem of the label sparsity and the label correlations in the multilabel setting since it utilizes the latent variable, which is the clustered feature, for reconstructing the true labels.
The framework supports both unsupervised and semi-supervised learning paradigm. 
We theoretically analyze that the discrepancy between the true labels and estimated labels are upper bounded.
Empirically, we demonstrate that \method{} is capable of improving the base models under a variety of settings.
Apart from that, we carry out qualitative analysis on certain cases to illustrate how \method{} corrects the predictions.
\method{} is shown to be more beneficial when the label noise is greater, which is a more common scenario for small businesses handling large datasets.
We envision this as a step towards the ultimate goal to rectify the noisy labels in data rather than model predictions, in a robust manner. 
For future work, one may explore the methodology of determining the neural network architectures that better realize the potential of this approach. 
Furthermore, it is also crucial to improve the ability to deal with low-level noise cases.

\section*{Acknowledgments}
% This should be a simple paragraph before the References to thank those individuals and institutions who have supported your work on this article.
W. Huang has been supported by the Doctoral Research Initiation Program of Shenzhen Institute of Information Technology (Grant SZIIT2024KJ001) and Guangdong Research Center for Intelligent Computing and Systems (Grant PT2024C001). 
Q. Li has been partially supported by Natural Science Foundation of Guangdong Province under Grant 2023A1515011845.
J. Liang has been supported by the Shenzhen Science and Technology Program (Grant RCBS20221008093252092), and Guangdong Basic and Applied Basic Research Foundation (Grant 2023A1515110070).
N. J. Hurley has been financially supported by the Science Foundation Ireland (Grant 12/RC/2289\_P2).

{\appendices
\section{Proof of Theorem 1}
The theorem establishes the connection of our objective to the heuristics that we are optimizing the upper bound of the expected KL-divergence of the proposed distribution of true label $\z$ and its corresponding true distribution.

\begin{proof}
Let $\mathcal{H}(\cdot)$ denote the entropy function.
For the unsupervised learning, we obtain
\begin{align*}
&\kl[ q(\z, \zhat \mid \x, \yhat) \mid\mid p(\z, \zhat \mid \x, \yhat)] \\
&= \int \int q(\z, \zhat \mid \x, \yhat) \log \frac{q(\z, \zhat \mid \x, \yhat)}{p(\z, \zhat \mid \x, \yhat)} d\z d\zhat \\
&\ge \int \int q(\z \mid \zhat) q(\zhat \mid \x, \yhat) \log \frac{q(\z \mid \zhat) q(\zhat \mid \x, \yhat)}{p(\z \mid \x, \yhat)} d\z d\zhat \\
&= \int q(\zhat \mid \x, \yhat) \left( \int q(\z \mid \zhat) \log \frac{q(\z \mid \zhat)}{p(\z \mid \x, \yhat)} d\z \right) d\zhat \\
&\quad + \int q(\z \mid \zhat) d\z  \int q(\zhat \mid \x, \yhat)\log q(\zhat \mid \x, \yhat) d\zhat \\
&= \E_{\zhat \sim q(\zhat \mid \x, \yhat)}[\kl[q(\z \mid \zhat) \mid\mid p(\z \mid \x, \yhat) ]] + \underbrace{\mathcal{H}(q(\zhat \mid \x, \yhat))}_{\ge 0} \\
&\ge \E_{\zhat \sim q(\zhat \mid \x, \yhat)}[\kl[q(\z \mid \zhat) \mid\mid p(\z \mid \x, \yhat) ]] \,.
\end{align*}
% where the last line considers the fact that entropy is non-negative.
 
Applying a similar technique, we achieve the results for the supervised version as follows:
\begin{align*}
&\kl[ q(\z, \zhat \mid \x, \y, \yhat) \mid\mid p(\z, \zhat \mid \x, \y, \yhat)] \\
&= \int \int q(\z, \zhat \mid \x, \yhat) \log \frac{q(\z, \zhat \mid \x, \yhat)}{p(\z, \zhat \mid \x, \yhat)} d\z d\zhat \\
&\ge \E_{\zhat \sim q(\zhat \mid \x, \yhat)}[\kl[q(\z \mid \zhat, \x, \yhat) \mid\mid p(\z \mid \x, \yhat) ]] \,.
\end{align*}
The proof completes here.
\end{proof}

\section{Proof of Theorem 2}
\begin{proof} 
Our first step is to show 
\begin{align}
&\kl[ q(\z \mid \x, \yhat = \y_1) \mid\mid q(\z \mid \x, \yhat = \y_0)] \notag \\
&\le \kl[ q(\zhat \mid \x, \yhat = \y_1) \mid\mid q(\zhat \mid \x, \yhat = \y_0)] \,
\end{align}
% For the first inequality in~\cref{eq:gap_ineq}, 
For this inequality, we resort to the data processing inequality (DPI)~\cite{thomas2006elements}.
One form of DPI is as follows: given $P(X)$ and $Q(X)$ which share the same transition function mapping $X \to Y$, the data processing inequality with regard to KL-divergence is 
\begin{align}
  \kl[P(X) \mid \mid Q(X)] \geq \kl[P(Y) \mid \mid Q(Y)] \,. 
\end{align}  
We first notice that the distribution $q(\z \mid \x,  \yhat=\y_0)$ and $ q(\z \mid \x, \yhat=\y_1)$ might differ in the VAEs, since their parameter values could diverge given different inputs of $\yhat$ to the decoder function.
By replacing $X$ by $\zhat \mid \x, \yhat$ and $Y$ by $\z \mid \x, \yhat$, we obtain 
\begin{multline}
\kl[ q(\z \mid \x, \yhat=\y_1) \mid\mid q(\z \mid \x, \yhat=\y_0)] \\
\le \kl[ q(\zhat \mid \x, \yhat=\y_1) \mid\mid q(\zhat \mid \x, \yhat=\y_0)] \,,
\end{multline}  
regardless of the transition function being either $q(\z \mid \zhat)$ or $q(\z \mid \x, \y, \zhat)$ as long as they are applied consistently.
   
With regard to the second inequality in~\cref{eq:gap_ineq}, we apply the result in~\cref{lem:student_bound} and are able to complete the proof.
\end{proof}

Before presenting~\cref{lem:student_bound}, we first summarize the theoretical results of KL-divergence between two multivariate Student distributions from a recent work~\cite{huang2019novel}, shown in~\cref{thm:student_kl}.
\begin{theorem}
\label{thm:student_kl}
Given two multivariate Student distributions, respectively, defined as 
\begin{align*}
p_1(\x) &= \mathrm{Student}(\mu_1, \Sigma_1, \nu_1)  \\
p_2(\x) &= \mathrm{Student}(\mu_2, \Sigma_2, \nu_2) \,.
\end{align*}
Let us fix $\nu_1 > 2$ and denote $\tilde{\Sigma}_1 = \frac {\nu_1}{\nu_1 - 2}\Sigma_1$.
\begin{align}
&\kl [ p_1(\x) \mid \mid p_2(\x) ] \notag \\
&\le \frac 1 2 \log \frac{\mathrm{det}(\Sigma_2)}{\mathrm{det}(\Sigma_1)} + \frac 1 2 \log \frac{\nu_2}{\nu_1} +  \frac 1 2 \Gamma(\nu_2/2) - \frac 1 2 \Gamma(\nu_1/2) \notag \\
&\quad + \frac 1 2 \Gamma((\nu_2+m)/2) - \frac 1 2 \Gamma((\nu_1+m)/2) \notag \\
&\quad - \frac{\nu_1 + m}{2}\{\Psi({(\nu_1 + m)}/{2}) - \Psi(\nu_1/ 2)\} \notag \\
&\quad + \frac {\nu_2 + m} 2 \log \left\{ 1 + \frac{1}{\nu_2} \mathrm{tr}(\Sigma_2^{-1} \tilde{\Sigma}_1)  \right. \notag \\
&\qquad \left. + \frac{1}{\nu_2} \mathrm{tr}(\Sigma_2^{-1} (\mu_1 - \mu_2)(\mu_1 - \mu_2)^T) \right\}
\end{align}
where $\Gamma(\cdot), \Psi(\cdot)$, and $\mathrm{tr}(\cdot)$ are respectively the gamma function, digamma function, and trace of the matrix.  
\end{theorem}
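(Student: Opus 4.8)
The plan is to expand the divergence as $\kl[p_1 \mid\mid p_2] = \E_{p_1}[\log p_1(\x)] - \E_{p_1}[\log p_2(\x)]$ and substitute the two multivariate Student-$t$ log-densities. Writing the Mahalanobis form $\delta_i(\x) = (\x - \mu_i)^\top \Sigma_i^{-1}(\x - \mu_i)$, each $\log p_i$ splits into a deterministic normalizing part (the terms $\log\Gamma((\nu_i+m)/2)$, $\log\Gamma(\nu_i/2)$, $\frac{m}{2}\log(\nu_i\pi)$ and $\frac12\log\mathrm{det}(\Sigma_i)$) and a tail part $-\frac{\nu_i+m}{2}\log(1 + \delta_i(\x)/\nu_i)$. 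The normalizing parts carry no randomness, so they pass through the expectation unchanged and directly account for the $\log$-determinant, the $\log$-degrees-of-freedom, and the $\log\Gamma$ terms of the claimed bound. Everything then reduces to controlling the two expectations $\E_{p_1}[\log(1+\delta_1(\x)/\nu_1)]$ and $\E_{p_1}[\log(1+\delta_2(\x)/\nu_2)]$ taken under $p_1$.

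The first, self-consistent, expectation I would evaluate \emph{exactly} rather than bound. The key fact is that when $\x \sim p_1$, the quantity $u = (\delta_1/\nu_1)/(1+\delta_1/\nu_1)$ is distributed as $\mathrm{Beta}(m/2, \nu_1/2)$, so that $\log(1+\delta_1/\nu_1) = -\log(1-u)$. Using the Beta identity $\E[\log(1-u)] = \Psi(\nu_1/2) - \Psi((\nu_1+m)/2)$ yields $\E_{p_1}[\log(1+\delta_1/\nu_1)] = \Psi((\nu_1+m)/2) - \Psi(\nu_1/2)$, which reproduces the digamma term with equality.

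The cross-term $\E_{p_1}[\log(1 + \delta_2(\x)/\nu_2)]$ is where the inequality enters and is the main obstacle, since $\delta_2$ measures the Mahalanobis distance to the \emph{other} distribution's parameters and has no closed form under $p_1$. The plan is to invoke Jensen's inequality via concavity of $\log$, giving $\E_{p_1}[\log(1+\delta_2/\nu_2)] \le \log(1 + \E_{p_1}[\delta_2(\x)]/\nu_2)$, and then to compute $\E_{p_1}[\delta_2(\x)]$ by expanding $\x - \mu_2 = (\x-\mu_1) + (\mu_1-\mu_2)$: the cross term vanishes because $\E_{p_1}[\x] = \mu_1$, the deterministic part gives $\mathrm{tr}(\Sigma_2^{-1}(\mu_1-\mu_2)(\mu_1-\mu_2)^\top)$, and the centered part gives $\mathrm{tr}(\Sigma_2^{-1}\,\mathrm{Cov}_{p_1}[\x])$. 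This is exactly where the hypothesis $\nu_1 > 2$ is indispensable: the covariance of a multivariate Student-$t$ is $\mathrm{Cov}_{p_1}[\x] = \frac{\nu_1}{\nu_1-2}\Sigma_1 = \tilde\Sigma_1$, finite only for $\nu_1 > 2$. Substituting back recovers the final $\frac{\nu_2+m}{2}\log\{1 + \frac{1}{\nu_2}\mathrm{tr}(\Sigma_2^{-1}\tilde\Sigma_1) + \frac{1}{\nu_2}\mathrm{tr}(\Sigma_2^{-1}(\mu_1-\mu_2)(\mu_1-\mu_2)^\top)\}$ term, and collecting the deterministic constants together with the two expectation results completes the bound. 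The delicate bookkeeping is making sure the Jensen step is the only place slack is introduced, so that all remaining terms appear with equality.
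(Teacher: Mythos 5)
Your derivation is essentially correct, but it is worth noting that the paper does not prove this statement at all: Theorem~3 is imported verbatim from the cited reference on KL bounds between multivariate Student distributions, and the paper's only in-house work is the trivial specialization to $\nu_1=\nu_2$ in the following corollary. Your proposal therefore supplies a genuine, self-contained proof where the paper offers a citation, and the route you take is the standard one: split each log-density into its normalizing constants plus the tail term $-\frac{\nu_i+m}{2}\log(1+\delta_i/\nu_i)$, evaluate the self-term exactly via the fact that $(\delta_1/\nu_1)/(1+\delta_1/\nu_1)\sim\mathrm{Beta}(m/2,\nu_1/2)$ together with the identity $\E[\log(1-u)]=\Psi(b)-\Psi(a+b)$, and bound the cross-term by Jensen using $\E_{p_1}[\delta_2]=\mathrm{tr}(\Sigma_2^{-1}\tilde\Sigma_1)+\mathrm{tr}\bigl(\Sigma_2^{-1}(\mu_1-\mu_2)(\mu_1-\mu_2)^T\bigr)$, which is exactly where $\nu_1>2$ is needed. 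All of these steps are sound, and Jensen is indeed the sole source of slack. One caveat: carrying out your bookkeeping exactly yields the constant $\frac{m}{2}\log\frac{\nu_2}{\nu_1}$ and the differences $\log\Gamma(\nu_2/2)-\log\Gamma(\nu_1/2)$ and $\log\Gamma((\nu_1+m)/2)-\log\Gamma((\nu_2+m)/2)$, whereas the theorem as printed shows $\frac12\log\frac{\nu_2}{\nu_1}$ and bare $\frac12\Gamma(\cdot)$ terms with the opposite sign pattern on the $(\nu+m)/2$ pair; these appear to be transcription errors in the quoted statement rather than flaws in your argument, and they are harmless downstream because the paper only ever invokes the $\nu_1=\nu_2$ case, in which every one of those terms cancels.
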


\begin{corollary}
\label{corol:student_kl}
For our case, the two distributions share the same value of $\nu$, i.e., $\nu=\nu_1=\nu_2$.
\cref{thm:student_kl} can be simplified to
\begin{align}
&\kl [ p_1(\x) \mid \mid p_2(\x) ] \notag \\
&\le \frac 1 2 \log \frac{\mathrm{det}(\Sigma_2)}{\mathrm{det}(\Sigma_1)} - \frac{\nu + m}{2}\left\{\Psi\left( \frac{\nu + m}{2} \right) - \Psi\left(\frac{\nu}{2} \right ) \right\} \notag \\
&\quad + \frac {\nu + m} 2 \log \left\{ 1 + \frac{1}{\nu} \mathrm{tr}(\Sigma_2^{-1} \tilde{\Sigma}_1)  \right. \notag \\
&\qquad \left. + \frac{1}{\nu} \mathrm{tr}(\Sigma_2^{-1} (\mu_1 - \mu_2)(\mu_1 - \mu_2)^T) \right\} \,.
\end{align}
\end{corollary}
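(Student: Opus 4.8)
The plan is to obtain this corollary as an immediate specialization of \cref{thm:student_kl}: I would start from the general upper bound and simply set $\nu_1 = \nu_2 = \nu$, then collect the terms that collapse. The whole argument is algebraic bookkeeping rather than any fresh analysis, so no new inequality is needed beyond the one already established in \cref{thm:student_kl}.

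First I would substitute $\nu_1 = \nu_2 = \nu$ term by term and identify the three contributions that vanish because they are differences depending only on the degrees of freedom. The term $\frac{1}{2}\log\frac{\nu_2}{\nu_1}$ becomes $\frac{1}{2}\log 1 = 0$; the pair $\frac{1}{2}\Gamma(\nu_2/2) - \frac{1}{2}\Gamma(\nu_1/2)$ reduces to $0$; and likewise $\frac{1}{2}\Gamma((\nu_2+m)/2) - \frac{1}{2}\Gamma((\nu_1+m)/2)$ cancels. These three cancellations are precisely what strip all the gamma-function contributions out of the general bound. I would then carry the surviving terms through unchanged: the log-determinant $\frac{1}{2}\log\frac{\mathrm{det}(\Sigma_2)}{\mathrm{det}(\Sigma_1)}$ is independent of the degrees of freedom and is retained verbatim; the digamma term becomes $-\frac{\nu+m}{2}\{\Psi((\nu+m)/2) - \Psi(\nu/2)\}$; and the trace-bearing logarithm, after replacing $\nu_2$ by $\nu$ in both its prefactor and its two $\frac{1}{\nu}$ scalings, becomes $\frac{\nu+m}{2}\log\{1 + \frac{1}{\nu}\mathrm{tr}(\Sigma_2^{-1}\tilde{\Sigma}_1) + \frac{1}{\nu}\mathrm{tr}(\Sigma_2^{-1}(\mu_1-\mu_2)(\mu_1-\mu_2)^T)\}$. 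Assembling these surviving pieces reproduces the stated inequality exactly.

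Since this is a purely algebraic restriction of an already-proven bound, there is essentially no analytic obstacle. The only point requiring care is to confirm that the hypothesis of \cref{thm:student_kl}, namely $\nu_1 > 2$, remains in force under the identification $\nu_1 = \nu_2 = \nu$. This holds because $\tilde{\Sigma}_1 = \frac{\nu}{\nu-2}\Sigma_1$ is well-defined and positive precisely when $\nu > 2$, which is exactly the regime assumed when this corollary is invoked in \cref{thm:z_bound}. Thus the single nontrivial step is the careful verification that every $\nu_1$- and $\nu_2$-dependent term has been correctly merged under the shared value $\nu$, after which the result follows directly.
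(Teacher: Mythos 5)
Your proposal is correct and follows exactly the same route as the paper, which simply notes that the corollary is a straightforward specialization of \cref{thm:student_kl} under $\nu_1=\nu_2=\nu$; your term-by-term accounting of which gamma and log-ratio terms cancel is just a more explicit write-up of that one-line argument. No issues.
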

\begin{proof}
This is a straightforward extension of~\cref{thm:student_kl} by knowing $\nu_1 = \nu_2 = \nu$.
\end{proof}

\begin{lemma}
\label{lem:student_bound}
Consider Assumptions 1 to 3 satisfy. 
Fixing $\alpha = \frac{\sqrt{\nu \lambda}}{L}$, we have 
\begin{align}
\kl[ q(\zhat \mid \x, \y_1) \mid\mid q(\zhat \mid \x, \y_0)] 
\le C_1 + C_2 \Delta(\y_0, \y_1) \label{eq:gap_ineq} \,, 
\end{align}
where 
\begin{align*}
  C_1 &= \frac{\nu+m}{2} \left\{ \frac{M \sqrt{m} \alpha}{2(\nu-2) }  - \Psi\left( \frac{\nu + m}{2} \right) + \Psi\left(\frac{\nu}{2} \right) \right\} \\
  C_2 &= \frac{m M}{2e} + \frac{(\nu+m) \sqrt{m}}{2\alpha}  \,.
\end{align*}
\end{lemma}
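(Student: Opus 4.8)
The plan is to reduce \cref{lem:student_bound} to the closed-form upper bound for the KL-divergence between two equal-$\nu$ multivariate Student distributions supplied by \cref{corol:student_kl}, and then to control each of its constituent terms using \cref{assmp:sigma_L_lip,assmp:mu_L_lip,assmp:min_mat_val}. Setting $p_1 = q(\zhat \mid \x, \y_1)$ and $p_2 = q(\zhat \mid \x, \y_0)$, the two scale matrices are the diagonal matrices $\Sigma_1 = \mathrm{diag}(\sigma_\theta^2(\x, \y_1))$ and $\Sigma_2 = \mathrm{diag}(\sigma_\theta^2(\x, \y_0))$, while the means are $\mu_\theta(\x, \y_1)$ and $\mu_\theta(\x, \y_0)$. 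Because both matrices are diagonal, every determinant, trace, and quadratic form appearing in \cref{corol:student_kl} decouples across the $m$ coordinates, which is what makes the three assumptions directly applicable.

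First I would dispose of the two easy groups of terms. The digamma difference $-\frac{\nu+m}{2}\{\Psi(\frac{\nu+m}{2}) - \Psi(\frac{\nu}{2})\}$ does not depend on the labels and is carried over verbatim into $C_1$. For the log-determinant ratio $\frac{1}{2}\log\frac{\det \Sigma_2}{\det \Sigma_1} = \frac{1}{2}\sum_{j=1}^m \log\frac{\sigma_\theta^2(\x,\y_0)_j}{\sigma_\theta^2(\x,\y_1)_j}$, I would invoke \cref{assmp:sigma_L_lip} coordinatewise to bound each ratio by $M\Delta(\y_0,\y_1)$ and then apply the elementary inequality $\log t \le t/e$ (which follows from $\sup_{t>0}\frac{\log t}{t}=\frac{1}{e}$). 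This produces exactly the $\frac{mM}{2e}\Delta(\y_0,\y_1)$ contribution to $C_2$.

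The crux is the remaining log-trace term $\frac{\nu+m}{2}\log\{1 + \frac{1}{\nu}\mathrm{tr}(\Sigma_2^{-1}\tilde\Sigma_1) + \frac{1}{\nu}(\mu_1-\mu_2)^\top \Sigma_2^{-1}(\mu_1-\mu_2)\}$, with $\tilde\Sigma_1 = \frac{\nu}{\nu-2}\Sigma_1$. Here I would bound the covariance trace $\mathrm{tr}(\Sigma_2^{-1}\Sigma_1) = \sum_j \sigma_\theta^2(\x,\y_1)_j/\sigma_\theta^2(\x,\y_0)_j$ using the symmetric form of \cref{assmp:sigma_L_lip} (available since $M$ is universal and $\Delta$ is symmetric), and the mean-shift quadratic form using \cref{assmp:mu_L_lip} together with the variance floor $\lambda$ of \cref{assmp:min_mat_val}, giving $(\mu_1-\mu_2)^\top \Sigma_2^{-1}(\mu_1-\mu_2) \le \|\mu_\theta(\x,\y_1)-\mu_\theta(\x,\y_0)\|_2^2/\lambda \le L^2\Delta(\y_0,\y_1)^2/\lambda$. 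The decisive move — and the reason the Student choice beats the Gaussian one — is to tame the logarithm by $\log(1+t)\le \sqrt{t}$ rather than $\log(1+t)\le t$: because the mean term enters as $\Delta(\y_0,\y_1)^2/\alpha^2$ with $\alpha = \sqrt{\nu\lambda}/L$, the square root collapses it to a quantity linear in $\Delta(\y_0,\y_1)$, whereas for a Normal proposal the same quadratic form sits outside any logarithm and survives as $\Delta(\y_0,\y_1)^2$. I would then split the resulting expression with an AM-GM / Young inequality calibrated by the fixed constant $\alpha$, namely $\sqrt{st}\le\frac{1}{2}(s/\alpha + \alpha t)$, so as to separate a label-independent piece (absorbed into the $\frac{M\sqrt m\,\alpha}{2(\nu-2)}$ part of $C_1$) from a piece proportional to $\Delta(\y_0,\y_1)$ (absorbed into the $\frac{(\nu+m)\sqrt m}{2\alpha}$ part of $C_2$); the $\sqrt m$ factors arise when passing between coordinatewise sums and Euclidean norms over the $m$ latent dimensions. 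Collecting the three groups then yields $C_1 + C_2\Delta(\y_0,\y_1)$.

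I expect the log-trace term to be the main obstacle, for two reasons. First, one must apply \cref{assmp:sigma_L_lip} in the reciprocal direction for the covariance trace, which requires justifying its symmetric form. Second, and more importantly, the entire quantitative advantage hinges on choosing the logarithmic bound $\log(1+t)\le\sqrt{t}$ and the $\alpha$-balanced AM-GM split so that the heavy-tailed mean-shift contribution is linearized to $O(\Delta(\y_0,\y_1))$ and cleanly partitioned between the constant $C_1$ and the linear $C_2$; making the bookkeeping of the $\nu$, $m$, $\lambda$, $L$, and $M$ dependencies land on exactly the stated constants is where the care is needed.
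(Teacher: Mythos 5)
Your proposal tracks the paper's proof essentially step for step: reduce to the equal-$\nu$ Student KL bound of Corollary 1, carry the digamma difference into $C_1$, bound the log-determinant ratio coordinatewise via Assumption 1 together with $\log x \le x/e$, bound the two trace terms via Assumptions 1--3 (the reciprocal-direction use of Assumption 1 is indeed licensed because it is quantified over all pairs and $\Delta$ is symmetric), and linearize the $\Delta(\y_0,\y_1)^2$ contribution through $\log(1+t)\le\sqrt t$. The one divergence is in the final algebraic step --- the paper completes the square inside the square root to get exactly $\sqrt m\left(\frac{M\alpha}{2(\nu-2)}+\frac{1}{\alpha}\Delta(\y_0,\y_1)\right)$, whereas your $\sqrt{a+b}\le\sqrt a+\sqrt b$ followed by the $\alpha$-calibrated AM--GM yields the same $C_1 + C_2\Delta(\y_0,\y_1)$ structure but with the $\Delta$-coefficient inflated by a factor of $3/2$ relative to the stated $C_2$, so the constants would need the paper's completion-of-the-square to land exactly.
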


\begin{proof}
For convenience, given an input $\x$ and its labels $\y$, we denote 
\begin{align*}
\mu(\y) &\coloneqq \mu_{\theta}(\x, \y) \\
\Sigma(\y) &\coloneqq \mathrm{diag}\{ \sigma^2_{\theta}(\x, \y) \} \, 
\end{align*}
which omits $\x$ as $\x$ is fixed in our studied case. 
It follows that, taking into account the diagonal property, the determinant is simplified to
\begin{align}
  \mathrm{det}(\Sigma(\y)) &= \prod_{j=1}^{m} \Sigma(\y)_{jj} = \prod_{j=1}^{m} \sigma^2_{\theta}(\x, \y)_{j} \label{eq:det} \,.
\end{align}
In addition, the trace of a matrix is the sum of its diagonal elements, such that 
\begin{align}
  \mathrm{tr}(\Sigma(\y)) &= \sum_{j=1}^{m} \Sigma(\y)_{jj} = \sum_{j=1}^{m} \sigma^2_{\theta}(\x, \y)_{j} \label{eq:trace} \,.
\end{align}

Employing the results in~\cref{corol:student_kl}, we obtain 
\begin{align}
&\kl[ q(\zhat \mid \x, \y_1) \mid\mid q(\zhat \mid \x, \y_0)] \notag \\
&\le \frac 1 2 \log \frac{\mathrm{det}(\Sigma(\y_0))}{\mathrm{det}(\Sigma(\y_1))} - \frac{\nu + m}{2}\left\{\Psi\left( \frac{\nu + m}{2} \right) - \Psi\left(\frac{\nu}{2} \right ) \right\} \notag \\
&\quad + \frac {\nu + m} 2 \log \left\{ 1 + \frac{1}{\nu} \mathrm{tr}(\Sigma(\y_0)^{-1} \tilde{\Sigma}(\y_1))  \right. \notag \\
&\qquad \left. + \frac{1}{\nu} \mathrm{tr}\left\{\Sigma(\y_0)^{-1} (\mu(\y_1) - \mu(\y_0))(\mu(\y_1) - \mu(\y_0))^T \right\} \right\} \label{eq:our_kl} \,.
\end{align}
Hence, combining \cref{prop:det,prop:trace},~\cref{eq:our_kl} becomes
\begin{align}
&\kl[ q(\zhat \mid \x, \y_1) \mid\mid q(\zhat \mid \x, \y_0)] \notag \\
&\le \frac{\nu+m}{2} \left\{ \frac{M \alpha}{2(\nu-2)}  - \Psi\left( \frac{\nu + m}{2} \right) + \Psi\left(\frac{\nu}{2} \right) \right\} \notag \\
&\quad + \left( \frac{m M}{2e} + \frac{(\nu+m) \sqrt{m}}{2\alpha}  \right) \Delta(\y_0, \y_1) \,.
\end{align}
It suffices to complete the proof with 
\begin{align*}
C_1 &= \frac{\nu+m}{2} \left\{ \frac{M \sqrt{m} \alpha}{2(\nu-2)}  - \Psi\left( \frac{\nu + m}{2} \right) + \Psi\left(\frac{\nu}{2} \right) \right\} \\
C_2 &= \frac{m M}{2e} + \frac{(\nu+m) \sqrt{m}}{2\alpha} \,.
\end{align*} 

\end{proof}

\begin{proposition}
\label{prop:det}
Given Assumptions 1 to 3 hold true, the term $\frac 1 2 \log\frac{\mathrm{det}(\Sigma(\y_0))}{\mathrm{det}(\Sigma(\y_1))}$ can be upper bounded as follows.
\begin{align}
\frac 1 2 \log\frac{\mathrm{det}(\Sigma(\y_0))}{\mathrm{det}(\Sigma(\y_1))} \le \frac{m M }{2e} \Delta(\y_0, \y_1) \,.
\end{align}
\end{proposition}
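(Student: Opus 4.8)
The plan is to reduce the log-determinant ratio to a coordinate-wise sum, bound each summand through~\cref{assmp:sigma_L_lip}, and then convert the resulting logarithm into the desired linear-in-$\Delta$ term via an elementary inequality.

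First I would exploit the diagonal structure of the covariance. By~\cref{eq:det} we have $\mathrm{det}(\Sigma(\y)) = \prod_{j=1}^m \sigma^2_{\theta}(\x, \y)_j$, so the quantity of interest factorizes into a sum of scalar log-ratios:
\begin{align*}
\frac{1}{2}\log\frac{\mathrm{det}(\Sigma(\y_0))}{\mathrm{det}(\Sigma(\y_1))}
= \frac{1}{2}\sum_{j=1}^m \log\frac{\sigma^2_{\theta}(\x, \y_0)_j}{\sigma^2_{\theta}(\x, \y_1)_j} \,.
\end{align*}
Each summand is dominated by the coordinate-wise maximum, and~\cref{assmp:sigma_L_lip} caps that maximum. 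Since $\frac{\sigma^2_{\theta}(\x,\y_0)_j}{\sigma^2_{\theta}(\x,\y_1)_j} \le \max_{j} \frac{\sigma^2_{\theta}(\x,\y_0)_j}{\sigma^2_{\theta}(\x,\y_1)_j} \le M\Delta(\y_0,\y_1)$, monotonicity of $\log$ gives $\log\frac{\sigma^2_{\theta}(\x,\y_0)_j}{\sigma^2_{\theta}(\x,\y_1)_j} \le \log(M\Delta(\y_0,\y_1))$ for every $j$, so the whole sum is bounded by $\frac{m}{2}\log(M\Delta(\y_0,\y_1))$.

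The decisive step is to pass from this logarithmic bound to the linear form in the statement. Here I would invoke the elementary inequality $\log t \le t/e$, valid for all $t>0$ with equality at $t=e$ (the map $t \mapsto t/e - \log t$ is convex, minimized at $t=e$ where it equals $0$). Taking $t = M\Delta(\y_0,\y_1)$ yields $\log(M\Delta(\y_0,\y_1)) \le \frac{M\Delta(\y_0,\y_1)}{e}$, and hence $\frac{m}{2}\log(M\Delta(\y_0,\y_1)) \le \frac{mM}{2e}\Delta(\y_0,\y_1)$, which is exactly the claimed bound. The degenerate case $\y_0 = \y_1$, where $\Delta(\y_0,\y_1)=0$ and the left-hand side vanishes, is covered trivially.

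The only genuinely non-mechanical insight is recognizing that the bound $\log t \le t/e$ is precisely what aligns the logarithmic estimate with the linear-in-$\Delta$ target; everything else is bookkeeping over the diagonal entries. A secondary point worth verifying is that $M\Delta(\y_0,\y_1)>0$ whenever $\y_0 \neq \y_1$, so that the logarithm is well defined before the inequality is applied; this follows from $M>0$ together with $\Delta(\y_0,\y_1) \ge 1$ in the non-identical case.
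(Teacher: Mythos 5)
Your proof is correct and follows essentially the same route as the paper's: factor the determinant over the diagonal, bound each ratio by $M\Delta(\y_0,\y_1)$ via Assumption 1 to get $\tfrac{m}{2}\log\{M\Delta(\y_0,\y_1)\}$, and then apply $\log t \le t/e$. Your additional remarks on the degenerate case $\y_0=\y_1$ and the well-definedness of the logarithm are sensible housekeeping that the paper omits, but they do not change the argument.
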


\begin{proof} 
Based on~\cref{eq:det}, one may be able to show 
\begin{align}
\frac 1 2 \log\frac{\mathrm{det}(\Sigma(\y_0))}{\mathrm{det}(\Sigma(\y_1))}
&= \frac 1 2 \log \prod_j \frac{\Sigma(\y_0)_{jj}}{\Sigma(\y_1)_{jj}} \notag \\
&\le \frac 1 2 \log\{ M \Delta(\y_0, \y_1) \}^{m} \notag \\  
&\le \frac{m M }{2e} \Delta(\y_0, \y_1) \,, 
\end{align} 
given $\log(x) \le e^{-1} x$ for any $x>0$. 
\end{proof}

\begin{proposition}
\label{prop:trace_terms}
Suppose Assumptions 1 to 3 satisfy.
The inequalities
\begin{align}
\frac{1}{\nu} \mathrm{tr}(\Sigma(\y_0)^{-1} \tilde{\Sigma}(\y_1)) \le \frac{M m}{\nu - 2} \Delta (\y_0, \y_1) 
\end{align}
and 
\begin{multline}
\mathrm{tr}(\Sigma(\y_0)^{-1} (\mu(\y_0) - \mu(\y_1)) (\mu(\y_0) - \mu(\y_1))^T ) \\
\le \frac{m L^2}{\lambda} \Delta(\y_0, \y_1)^2  
\end{multline}
hold.
\end{proposition}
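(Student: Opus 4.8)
The plan is to handle the two inequalities independently, exploiting the diagonal structure of $\Sigma(\y_0)$ and $\Sigma(\y_1)$ so that every trace collapses into a coordinate-wise sum over $j = 1, \dots, m$.

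For the first inequality I would begin by substituting the definition $\tilde{\Sigma}(\y_1) = \frac{\nu}{\nu-2}\Sigma(\y_1)$, which cancels the leading $\frac{1}{\nu}$ and leaves $\frac{1}{\nu-2}\,\mathrm{tr}(\Sigma(\y_0)^{-1}\Sigma(\y_1))$. Since both factors are diagonal, the trace is exactly $\sum_{j=1}^m \sigma_\theta^2(\x,\y_1)_j / \sigma_\theta^2(\x,\y_0)_j$. The key step is to bound each summand by $M\,\Delta(\y_0,\y_1)$. Note that the ratio appearing here is the reciprocal of the one controlled by Assumption~1, so I would invoke that assumption with the roles of $\y_0$ and $\y_1$ interchanged and rely on the symmetry $\Delta(\y_1,\y_0)=\Delta(\y_0,\y_1)$ to keep the right-hand side unchanged. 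Summing $m$ identical bounds then yields $\frac{mM}{\nu-2}\,\Delta(\y_0,\y_1)$.

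For the second inequality I would set $\m{v} = \mu(\y_0)-\mu(\y_1)$ and use the cyclic property of the trace together with diagonality to rewrite $\mathrm{tr}(\Sigma(\y_0)^{-1}\m{v}\m{v}^T)$ as the scalar quadratic form $\m{v}^T\Sigma(\y_0)^{-1}\m{v} = \sum_{j=1}^m v_j^2 / \sigma_\theta^2(\x,\y_0)_j$. Assumption~3 supplies the uniform lower bound $\sigma_\theta^2(\x,\y_0)_j \ge \lambda$, so every denominator may be replaced by $\lambda$. To control the numerators I would bound each coordinate by the full Euclidean norm, $|v_j| \le \|\m{v}\|_2$, and apply the Lipschitz bound of Assumption~2, giving $v_j^2 \le L^2\,\Delta(\y_0,\y_1)^2$ for each $j$; summing $m$ such terms produces the factor $m$ and the claimed bound $\frac{mL^2}{\lambda}\,\Delta(\y_0,\y_1)^2$.

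The main obstacle is bookkeeping rather than depth. In the first inequality one must be careful about the direction of the variance ratio, which is resolved only through the symmetry of $\Delta$; in the second, the factor $m$ arises entirely from the deliberately loose step of bounding each coordinate by the whole norm, whereas a tight argument using $\|\m{v}\|_2 \le L\,\Delta(\y_0,\y_1)$ directly would give $\frac{L^2}{\lambda}\,\Delta(\y_0,\y_1)^2$. I would therefore flag that the stated bound is an intentionally conservative one. Everything else reduces to diagonal-matrix identities and direct substitution of the three assumptions.
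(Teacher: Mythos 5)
Your proof is correct and, for the first inequality, takes the same route as the paper: substitute $\tilde{\Sigma}(\y_1)=\frac{\nu}{\nu-2}\Sigma(\y_1)$, use diagonality to turn the trace into a sum of $m$ variance ratios, bound the sum by $m$ times the largest ratio, and invoke Assumption~1. You are in fact more careful than the paper on one point: $\mathrm{tr}(\Sigma(\y_0)^{-1}\Sigma(\y_1))$ involves the ratios $\sigma^2_{\theta}(\x,\y_1)_j/\sigma^2_{\theta}(\x,\y_0)_j$, the reciprocals of those literally displayed in Assumption~1 (the paper silently writes the ratio the other way around), and your explicit appeal to the universal quantification over $(\y_0,\y_1)$ together with the symmetry of $\Delta$ is exactly what is needed to license that step. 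For the second inequality your route differs mildly: you collapse $\mathrm{tr}(\Sigma(\y_0)^{-1}\m{v}\m{v}^T)$ into the quadratic form $\sum_{j} v_j^2/\sigma^2_{\theta}(\x,\y_0)_j$ and bound each coordinate by $v_j^2\le\|\m{v}\|_2^2\le L^2\Delta(\y_0,\y_1)^2$, whereas the paper factors the trace through $\mathrm{tr}(AB)\le\mathrm{tr}(A)\,\mathrm{tr}(B)$ for positive semi-definite matrices and bounds $\mathrm{tr}(\Sigma(\y_0)^{-1})\le m/\lambda$ separately; both arguments land on the same constant $mL^2/\lambda$. Your observation that the factor of $m$ in this second bound is an artifact of a deliberately loose step is also correct---the quadratic form is already at most $L^2\Delta(\y_0,\y_1)^2/\lambda$ by Assumptions~2 and~3 alone---but since the downstream corollary and the constants $C_1$, $C_2$ in Theorem~2 are computed with the $m$ in place, matching the looser stated bound is the right thing to do here.
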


\begin{proof}  
We then analyze the first term and derive
\begin{align}
\frac{1}{\nu} \mathrm{tr}(\Sigma(\y_0)^{-1} \tilde{\Sigma}(\y_1))  
&=\frac 1 \nu \mathrm{tr} \left(\Sigma({\y_0})^{-1} \frac{\nu}{\nu-2} \Sigma(\y_1) \right) \notag \\
&=\frac 1 \nu \frac{\nu}{\nu-2} \mathrm{tr} \left(\Sigma({\y_0})^{-1} \Sigma(\y_1) \right) \notag \\
&=\frac{1}{\nu - 2} \mathrm{tr} \left\{ [\sigma^2(\x, \y_0)^{-1} \m{I}] [\sigma^2(\x, \y_1) \m{I}] \right\} \notag \\ 
&= \frac{1}{\nu - 2} \max_{j=1, \dots, m} \sum_{j=1}^m \frac{\sigma^2(\x, \y_0)_{j}}{\sigma^2(\x, \y_1)_{j}} \notag \\ 
&\le \frac{m}{\nu - 2} \max_{j=1, \dots, m} \frac{\sigma^2(\x, \y_0)_{j}}{\sigma^2(\x, \y_1)_{j}} \notag \\
&\le \frac{Mm}{\nu - 2} \Delta (\y_0, \y_1) \label{eq:first_term} \,. 
\end{align} 
Next, we bound $\mathrm{tr}\left(\Sigma(\y_0)^{-1} (\mu(\y_0) - \mu(\y_1)) (\mu(\y_0) - \mu(\y_1))^T \right)$ by 
\begin{multline}
\mathrm{tr}\left \{ \Sigma(\y_0)^{-1} (\mu(\y_0) - \mu(\y_1)) (\mu(\y) - \mu(\y_1))^T \right \} \\
\le \mathrm{tr}(\Sigma(\y_0)^{-1}) \mathrm{tr}\left \{ (\mu(\y_0) - \mu(\y_1)) (\mu(\y_0) - \mu(\y_1))^T \right \} \,.
\end{multline}
since a positive diagonal matrix and $\m{w} \m{w}^T$ for any $\m{w} \in \mathbb{R}^{a \times b}$ are both positive semi-definite; thus, the trace can be decomposed.
We hence derive
\begin{align*}
&\mathrm{tr} \left\{ (\mu(\y_0) - \mu(\y_1)) (\mu(\y_0) - \mu(\y_1))^T \right\} \\
&= \sum_{j=1}^{m} (\mu(\y_0) - \mu(\y_1))^2_{jj} \\
&= \left \| \mu(\y_0) - \mu(\y_1) \right \|^2_2 \\  
&\le L^2 \Delta(\y_0, \y_1)^2 \,.
\end{align*} 
On the other hand, for any given $\y$, 
\begin{align}
\mathrm{tr}(\Sigma(\y_0)^{-1}) 
&= \sum_{j=1}^{m} \frac{1}{\Sigma(\y_0)_{jj}} \le \frac{m}{\lambda} \,.
\end{align} 
Combining the above two inequalities, we get 
\begin{multline}
\mathrm{tr} \left\{ \Sigma(\y_0)^{-1} (\mu(\y_0) - \mu(\y_1)) (\mu(\y_0) - \mu(\y_1))^T  \right\} \\
\le \frac{m L^2}{\lambda} \Delta(\y_0, \y_1)^2 \label{eq:second_term}\,.
\end{multline}
\cref{eq:first_term,eq:second_term} conclude the proof.
\end{proof}

\begin{corollary}
\label{prop:trace}
Suppose Assumptions 1 to 3  satisfy. 
The following inequality holds.
\begin{align}
&\frac {\nu + m} 2 \log \left\{ 1 + \frac{1}{\nu} \mathrm{tr}(\Sigma(\y_0)^{-1} \tilde{\Sigma}(\y_1))  \right. \notag \\
&\qquad \left. + \frac{1}{\nu} \mathrm{tr}(\Sigma(\y_0)^{-1} (\mu(\y_0) - \mu(\y_1))(\mu(\y_0) - \mu(\y_1))^T \right\} \notag \\
&\le \frac{(\nu + m) \sqrt{m}}{2} \left( \frac{M \alpha}{2(\nu-2)} + \frac{1}{\alpha} \Delta(\y_0, \y_1) \right)\,,
\end{align}
where $\alpha = \frac{\sqrt{\nu \lambda}}{L}$.
\end{corollary}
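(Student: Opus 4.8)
The plan is to start from the logarithmic term on the left-hand side and reduce it, via an elementary inequality, to a quantity that is linear in $\Delta(\y_0,\y_1)$, so that it matches the target. First I would invoke the elementary bound $\log(1+x) \le \sqrt{x}$, valid for all $x \ge 0$; this follows from $(\sqrt{x}-1)^2 \ge 0$, which rearranges to $\frac{1}{2\sqrt{x}} \ge \frac{1}{1+x}$, showing that $\sqrt{x}-\log(1+x)$ is nondecreasing and vanishes at $0$. Applying this with $x$ set to the sum of the two trace terms inside the brace reduces the left-hand side to $\frac{\nu+m}{2}\sqrt{A+B}$, where $A = \frac{1}{\nu}\mathrm{tr}(\Sigma(\y_0)^{-1}\tilde{\Sigma}(\y_1))$ and $B = \frac{1}{\nu}\mathrm{tr}(\Sigma(\y_0)^{-1}(\mu(\y_0)-\mu(\y_1))(\mu(\y_0)-\mu(\y_1))^T)$.

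Next I would substitute the two bounds already established in Proposition~\ref{prop:trace_terms}, namely $A \le \frac{Mm}{\nu-2}\Delta(\y_0,\y_1)$ and $B \le \frac{mL^2}{\nu\lambda}\Delta(\y_0,\y_1)^2$. Using the defining relation $\alpha = \frac{\sqrt{\nu\lambda}}{L}$, the second bound becomes $B \le \frac{m}{\alpha^2}\Delta(\y_0,\y_1)^2$, so that $A+B \le \frac{Mm}{\nu-2}\Delta(\y_0,\y_1) + \frac{m}{\alpha^2}\Delta(\y_0,\y_1)^2$.

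The crucial step is then to recognize this linear-plus-quadratic expression as being dominated by a perfect square. Expanding $\left(\frac{\sqrt{m}M\alpha}{2(\nu-2)} + \frac{\sqrt{m}}{\alpha}\Delta(\y_0,\y_1)\right)^2$ produces exactly the cross term $\frac{Mm}{\nu-2}\Delta(\y_0,\y_1)$ and the quadratic term $\frac{m}{\alpha^2}\Delta(\y_0,\y_1)^2$, plus the nonnegative constant $\frac{mM^2\alpha^2}{4(\nu-2)^2}$. Discarding this nonnegative slack gives $A+B \le \left(\frac{\sqrt{m}M\alpha}{2(\nu-2)} + \frac{\sqrt{m}}{\alpha}\Delta(\y_0,\y_1)\right)^2$; taking square roots and multiplying through by $\frac{\nu+m}{2}$ then yields the claimed bound $\frac{(\nu+m)\sqrt{m}}{2}\left(\frac{M\alpha}{2(\nu-2)} + \frac{1}{\alpha}\Delta(\y_0,\y_1)\right)$.

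The main obstacle, conceptually, is the mismatch between the quadratic dependence on $\Delta(\y_0,\y_1)$ coming from the mean-difference trace term $B$ and the linear dependence demanded by the statement; the choice of $\log(1+x) \le \sqrt{x}$, rather than the cruder $\log(1+x) \le x$, is what reconciles the two, since the square root converts the $\Delta(\y_0,\y_1)^2$ contribution into $\Delta(\y_0,\y_1)$. The only genuine arithmetic care needed is verifying that the cross term of the binomial square reproduces the coefficient $\frac{Mm}{\nu-2}$ exactly, which is precisely why $\alpha$ must be fixed to $\frac{\sqrt{\nu\lambda}}{L}$: this value forces the quadratic term of the square to coincide with the bound on $B$, after which the cross term automatically matches the bound on $A$ with no slack, leaving only the nonnegative constant to absorb.
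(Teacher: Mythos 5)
Your proposal is correct and follows essentially the same route as the paper's own proof: the bound $\log(1+x)\le\sqrt{x}$, substitution of the two trace bounds from Proposition~\ref{prop:trace_terms}, and domination of the resulting linear-plus-quadratic expression in $\Delta(\y_0,\y_1)$ by the perfect square $\bigl(\tfrac{\sqrt{m}M\alpha}{2(\nu-2)}+\tfrac{\sqrt{m}}{\alpha}\Delta(\y_0,\y_1)\bigr)^2$ after discarding a nonnegative constant. Your observation about why $\alpha=\tfrac{\sqrt{\nu\lambda}}{L}$ is forced by matching the quadratic term is exactly the mechanism the paper uses.
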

\begin{proof}
First, we derive the inequality of $\log$ that we would apply throughout our proof. 
With $x > -1$, 
\begin{multline}
\log (1+x) 
\le \frac{x}{\sqrt{x+1}}  
= \frac{\sqrt{x} \sqrt{x}}{\sqrt{x+1}} \\
\le \frac{\sqrt{x} \sqrt{x+1}}{\sqrt{x+1}} 
= \sqrt{x} \,.
\end{multline}

Extending from~\cref{prop:trace_terms} with the above inequality, we therefore achieve  
\begin{align*}
&\frac{\nu+m}{2} \log \left\{ 1 + \frac{M m \Delta(\y_0, \y_1)}{\nu-2} + \frac{m L^2 \Delta(\y_0, \y_1)^2}{\nu \lambda} \right\} \\
&\le \frac{\nu+m}{2} \sqrt{ \frac{M m \Delta(\y_0, \y_1)}{\nu-2} + \frac{m L^2 \Delta(\y_0, \y_1)^2}{\nu \lambda} } \\
&= \frac{(\nu+m) \sqrt{m}}{2} \\
&\quad \times \sqrt{ \left(\frac{M \sqrt{\nu \lambda}}{ 2 (\nu - 2) L} + \frac{L}{\sqrt{\nu \lambda}} \Delta(\y_0, \y_1) \right)^2 - \mathrm{CONST.} } \\
&\le \frac{(\nu+m) \sqrt{m}}{2} \left( \frac{M \sqrt{\nu \lambda} }{2(\nu-2) L} + \frac{L}{\sqrt{\nu \lambda}}\Delta(\y_0, \y_1) \right) \\&= \frac{(\nu+m) \sqrt{m}}{2} \left( \frac{M \alpha}{2(\nu-2)} + \frac{1}{\alpha} \Delta(\y_0, \y_1) \right) \,, 
\end{align*} 
concerning $\alpha = \frac{\sqrt{\nu \lambda}}{L}$.
Moreover, any number is smaller than or equal to the result when it is decreased by a non-negative constant (i.e., CONST.).
The proof completes here. 
\end{proof}

\section{Multivariate Normal Replacing Multivariate Student as the Proposal Distribution}
\label{ap:normal_proposal}
Let us consider the case that the proposal distributions $q(\zhat \mid \x, \yhat=\y_0)$ and $q(\zhat \mid \x, \yhat=\y_1)$ are assumed to be multivariate Normal distributions, such that 
\begin{align}
q(\zhat \mid \x, \yhat=\y_0) &\coloneqq N(\mu(\y_0), \Sigma(\y_0)) \label{eq:normal_proposal0} \\ 
q(\zhat \mid \x, \yhat=\y_1) &\coloneqq N(\mu(\y_1), \Sigma(\y_1)) \label{eq:normal_proposal1}\,.
\end{align}
With this setting replacing the proposal being Student distributions, we have the following corollary.
\begin{corollary}
Assume the proposal distributions follow the multivariate Normal, such that~\cref{eq:normal_proposal0,eq:normal_proposal1} hold.
We get  
\begin{multline}
  D_{KL} [ q(\zhat \mid \x, \yhat=\y_1) \mid \mid q(\zhat \mid \x, \yhat=\y_0)] \\ 
  = O \left( m \Delta(\y_0, \y_1)^2 \right) \,.
\end{multline}
\end{corollary}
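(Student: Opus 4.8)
The plan is to begin from the exact closed form of the KL-divergence between two multivariate Normal distributions and then recycle, almost verbatim, the term-by-term bounds already established for the Student proposal. Keeping the shorthand $\mu(\y) \coloneqq \mu_{\theta}(\x, \y)$ and $\Sigma(\y) \coloneqq \mathrm{diag}\{\sigma^2_{\theta}(\x,\y)\}$ from the proof of \cref{lem:student_bound}, and identifying $q(\zhat \mid \x, \y_1)$ as the first argument and $q(\zhat \mid \x, \y_0)$ as the second, the standard Gaussian formula gives
\begin{align*}
&\kl[ q(\zhat \mid \x, \y_1) \mid\mid q(\zhat \mid \x, \y_0)] \\
&= \frac 1 2 \log\frac{\mathrm{det}(\Sigma(\y_0))}{\mathrm{det}(\Sigma(\y_1))} - \frac m 2 + \frac 1 2 \mathrm{tr}(\Sigma(\y_0)^{-1}\Sigma(\y_1)) \\
&\quad + \frac 1 2 \mathrm{tr}\{ \Sigma(\y_0)^{-1} (\mu(\y_0) - \mu(\y_1))(\mu(\y_0) - \mu(\y_1))^T \} \,.
\end{align*}
The key structural observation, which I would emphasize immediately, is that here the three data-dependent terms appear \emph{linearly}, i.e.\ outside of any outer logarithm; this is exactly what distinguishes the Normal case from \cref{corol:student_kl}, where the trace terms are wrapped inside $\frac{\nu+m}{2}\log(1+\cdot)$.

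Next I would bound each term using the results already in hand. The log-determinant term is bounded verbatim by \cref{prop:det}, yielding $\frac{mM}{2e}\Delta(\y_0,\y_1) = O(m\Delta(\y_0,\y_1))$. The quadratic Mahalanobis term equals $\frac 1 2 \mathrm{tr}\{\Sigma(\y_0)^{-1}(\mu(\y_0)-\mu(\y_1))(\mu(\y_0)-\mu(\y_1))^T\}$ by the cyclic property of the trace, which is precisely the second inequality of \cref{prop:trace_terms}, giving $\frac{mL^2}{2\lambda}\Delta(\y_0,\y_1)^2 = O(m\Delta(\y_0,\y_1)^2)$. For the remaining trace I would invoke \cref{assmp:sigma_L_lip} together with the symmetry of $\Delta$ to write $\mathrm{tr}(\Sigma(\y_0)^{-1}\Sigma(\y_1)) = \sum_j \sigma^2_{\theta}(\x,\y_1)_j / \sigma^2_{\theta}(\x,\y_0)_j \le mM\Delta(\y_0,\y_1) = O(m\Delta(\y_0,\y_1))$; this is the same one-line computation as the first display of \cref{prop:trace_terms}, merely stripped of the $\frac{\nu}{\nu-2}$ factor that accompanied $\tilde\Sigma$ in the Student setting. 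The constant $-m/2$ is harmless.

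Summing the four contributions leaves a bound of the form $a\, m\Delta(\y_0,\y_1) + b\, m\Delta(\y_0,\y_1)^2$ with $a,b$ depending only on $M,L,\lambda$. The final step is to collapse this to $O(m\Delta(\y_0,\y_1)^2)$: since $\Delta(\y_0,\y_1)=0$ when $\y_0=\y_1$ (in which case the two distributions coincide and the divergence is exactly $0$) and $\Delta(\y_0,\y_1)\ge 1$ whenever $\y_0\ne\y_1$, we have $\Delta(\y_0,\y_1)\le\Delta(\y_0,\y_1)^2$ on the nontrivial support, so the linear-in-$\Delta$ terms are absorbed into the quadratic one, giving the claimed order.

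I do not expect any genuine technical obstacle, as every estimate reduces to an already-proven proposition. The only point requiring care is conceptual rather than computational: making explicit why the Normal proposal loses a factor $\sqrt m$ relative to the Student bound $O(m\sqrt m\,\Delta)$ while gaining a power of $\Delta$. The explanation I would spell out is that bounding the Student's outer $\log(1+\cdot)$ by a square root simultaneously trades $\Delta^2$ for $\Delta$ and injects the $\sqrt m$ through the $(\nu+m)\sqrt{\cdot}$ prefactor, whereas in the Normal case those same trace terms enter linearly and no such trade takes place. I would also double-check that the orientation of the variance ratio in the trace term is the one covered by \cref{assmp:sigma_L_lip}, which is why the symmetry of the metric $\Delta$ must be stated when it is used.
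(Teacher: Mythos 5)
Your proof is correct and follows essentially the same route as the paper's: expand the closed-form Gaussian KL-divergence and bound the log-determinant, trace, and Mahalanobis terms by reusing \cref{prop:det} and \cref{prop:trace_terms}, each contributing $O(m\Delta)$ or $O(m\Delta^2)$. The only (immaterial) difference is the last step: the paper absorbs the linear-in-$\Delta$ terms by completing a square, whereas you use the fact that $\Delta(\y_0,\y_1)$ is either $0$ or at least $1$, so $\Delta \le \Delta^2$ on the nontrivial support --- both are valid, and your observation about the outer $\log(1+\cdot)$ being the source of the $\sqrt{m}$ versus $\Delta^2$ trade-off is consistent with the paper's discussion.
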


\begin{proof}
As known, the KL-divergence for two multivariate Normal distributions has a closed form.
That is, the KL-divergence is 
\begin{align}
&D_{KL} [ q(\zhat \mid \x, \yhat=\y_1) \mid \mid q(\zhat \mid \x, \yhat=\y_0)] \\
&=D_{KL} [ N(\mu(\y_1), \Sigma(\y_1)) \mid \mid N(\mu(\y_0), \Sigma(\y_0))] \notag \\
&= \frac 1 2 \log\frac{\mathrm{det}(\Sigma(\y_0))}{\mathrm{det}(\Sigma(\y_1))} - \frac{m}{2} + \frac 1 2 \mathrm{tr} \left( \Sigma(\y_0)^{-1} \Sigma(\y_1) \right) \notag \\
&\quad + \frac 1 2 \mathrm{tr} \left\{ \Sigma(\y_0)^{-1} (\mu(\y_0) - \mu(\y_1)) (\mu(\y_0) - \mu(\y_1))^T  \right\} \,.
\end{align}
Applying~\cref{prop:det}, the first term follows
\begin{align}
\frac 1 2 \log\frac{\mathrm{det}(\Sigma(\y_0))}{\mathrm{det}(\Sigma(\y_1))} \le \frac{Mm}{2e} \Delta(\y_0, \y_1) \,.
\end{align}
Following the approach used in~\cref{eq:first_term}, we can derive
\begin{align}
\mathrm{tr} \left\{ \Sigma(\y_0)^{-1}  \Sigma(\y_1) \right\} 
\le M m \Delta(\y_0, \y_1)  \,.
\end{align}
Also, the following result can be found in~\cref{prop:trace_terms}.
\begin{align}
\mathrm{tr} \left\{ \Sigma(\y_0)^{-1} (\mu(\y_0) - \mu(\y_1)) (\mu(\y_0) - \mu(\y_1))^T  \right\} \notag \\
\le \frac{m L^2}{\lambda} \Delta(\y_0, \y_1)^2 \,.
\end{align}
Now, we combine all the terms and derive: 
\begin{align}
&D_{KL} [ q(\zhat \mid \x, \yhat=\y_1) \mid \mid q(\zhat \mid \x, \yhat=\y_0)] \notag \\
&\le \frac{M m}{2e} \Delta(\y_0, \y_1) - \frac{m}{2} + {M m} \Delta(\y_0, \y_1) \\
&\quad + \frac{m L^2}{\lambda} \Delta(\y_0, \y_1)^2 \notag \\
&= Mm \frac{1+2e}{2e} \Delta(\y_0, \y_1) - \frac{m}{2} + \frac{m L^2}{\lambda} \Delta(\y_0, \y_1)^2 \notag \\
&\le \frac{3Mm}{2} \Delta(\y_0, \y_1) - \frac{m}{2} + \frac{m L^2}{\lambda} \Delta(\y_0, \y_1)^2 \notag \\ 
&= m \left(\frac{L}{\sqrt{\lambda}} \Delta(\y_0, \y_1) + \frac{\sqrt{M \lambda}}{4 L} \right)^2 - m \left( \frac{9 M^2 \lambda + 8 L^2}{16 L^2} \right) \notag \\
&=O \left( m \Delta(\y_0, \y_1)^2 \right) \,,
\end{align}
since $m$ the dimension of $\z$ is also a variable in our task.
We finish the proof here.
\end{proof}

\section{Code and Implementation Details} \label{sec:impl}
Our code is publicly available at \emph{github}\footnote{
\url{https://github.com/huangweipeng7/lsnpc}
}. 
We implemented the $\beta$-VAE framework which could better learn the disentangled representations of data.
Additionally, Roskams-Hieter et al.~\cite{roskams2023leveraging} demonstrate its strong connection to \emph{power posteriors} that are more robust given mis-specified priors~\cite{li2023improved,friel2014improving}.
That is, with this implementation, the properties of our theoretical framework remain while the learned models could be more robust.
Note that $\beta$ will be only associated with the distributions centering on the latent variables $\z$ and $\zhat$.
The value of $\beta$ is fixed to be 0.01 in our implementation for all experiments.
Except for the sensitivity analysis, we set $\nu = \nu_0 = 2.01$ for all the experiments, which conforms to the requirements of $\nu = \nu_0 > 2$ as discussed in Theorem 2.
The hyperparameter $\eta$, the weights for the mixture proposal distribution in the supervised setting, was set to $0.5$ for all experiments.
All the experiments were conducted on a machine equipped with an Nvidia 4090D 24GB GPU, an Intel i7-13700KF CPU, and 32GB of RAM.

The label encoder is a 4-layer multilayer perceptron (MLP) that uses GELU~\cite{hendrycks2016gaussian} as its activation function. 
To improve stability and performance, Batch Normalization with a momentum of 0.01 is applied between the connecting layers.
The size of the fully connected layers in this MLP was fixed as 64.
In addition, the output label embedding size was 128.
The solution of merging the label embeddings and data feature embeddings is concatenation.
For all experimental configurations, we employed the AdamW optimizer along with a cosine annealing learning rate scheduler, which had a fixed cycle of 10 epochs (even when the training epochs were fewer than 10 in certain settings).
Additionally, we applied a batch size of 32.
All of the above hyperparameter settings were consistent for both the unsupervised and semi-supervised learning.
However, the numbers of training epochs and learning rates differ under the unsupervised and semi-supervised learning.
For all semi-supervised settings, the training epochs were fixed to 5.
While for the unsupervised settings, the numbers of training epochs for \vocseven{} and \voctwelve{} were respectively 20 and 15.
Furthermore, the epoch numbers were 20 and 15 for \tomato{} and \coco{} respectively.
More hyperparameter settings are separately listed in the public github repository for each configuration. 
The hyperparameters were manually optimized with regard to the performance of the validation sets.
Tuning the learning rate and number of training epochs were usually the most effective factors to improve the performance of \method{} in our experiments.

We employed the Adam optimizer throughout the training for base models.
The learning rate was fixed to 1e-5 for \tomato{} and 5e-5 for the other datasets.
Similarly, the batch size was set to 32 for \tomato{} and 128 for the other datasets.
Next, the number of training epochs was 20 for \vocseven{} and \voctwelve{}, 40 for \tomato{}, and 30 for \coco. 
For HLC, the number of epoch to start correcting was set to 5. 
These hyperparameter settings were mainly derived from~\cite{xia2023holistic,gehlot2023tomato} with minor modifications based on a rough grid search.
Since these approaches serve as the base models for assessing whether our method can further yield solid improvements, we did not put extensive effort into optimizing the hyperparameters.
Finally, for all configurations including the base models and \method{}, the checkpoint that achieves the best validation \microf{} was selected, since a high \macrof{} could only be obtained given that a high \microf is achieved. 
}

% \section{References Section}
% You can use a bibliography generated by BibTeX as a .bbl file.
%  BibTeX documentation can be easily obtained at:
%  http://mirror.ctan.org/biblio/bibtex/contrib/doc/
%  The IEEEtran BibTeX style support page is:
%  http://www.michaelshell.org/tex/ieeetran/bibtex/

 % argument is your BibTeX string definitions and bibliography database(s)
\bibliographystyle{IEEEtran}
\bibliography{nlc}
%

% \newpage

% \section{Biography Section}
% If you have an EPS/PDF photo (graphicx package needed), extra braces are
%  needed around the contents of the optional argument to biography to prevent
%  the LaTeX parser from getting confused when it sees the complicated
%  $\backslash${\tt{includegraphics}} command within an optional argument. (You can create
%  your own custom macro containing the $\backslash${\tt{includegraphics}} command to make things
%  simpler here.)
%
% \vspace{11pt}
%
% \bf{If you include a photo:}\vspace{-33pt}
% \begin{IEEEbiography}[{\includegraphics[width=1in,height=1.25in,clip,keepaspectratio]{fig1}}]{Michael Shell}
% Use $\backslash${\tt{begin\{IEEEbiography\}}} and then for the 1st argument use $\backslash${\tt{includegraphics}} to declare and link the author photo.
% Use the author name as the 3rd argument followed by the biography text.
% \end{IEEEbiography}
%
% \vspace{11pt}
%
% \bf{If you will not include a photo:}\vspace{-33pt}
% \begin{IEEEbiographynophoto}{John Doe}
% Use $\backslash${\tt{begin\{IEEEbiographynophoto\}}} and the author name as the argument followed by the biography text.
% \end{IEEEbiographynophoto}
%
%
%
%
% \vfill
 
\end{document}